\documentclass[journal]{IEEEtran}
\usepackage{amsmath,amsfonts}
\usepackage{array}
\usepackage[caption=false,font=normalsize,labelfont=sf,textfont=sf]{subfig}
\usepackage{textcomp}
\usepackage{stfloats}
\usepackage{url}
\usepackage{verbatim}
\usepackage{graphicx}

\usepackage[ruled, vlined, commentsnumbered, linesnumbered]{algorithm2e}
\usepackage{algorithmic}
\usepackage{amsthm}

\usepackage[most]{tcolorbox}

\newcommand\myeq{\mathrel{\overset{\makebox[0pt]{\mbox{\normalfont\tiny\sffamily d}}}{=}}}

\usepackage{color}
\usepackage{xcolor}

\usepackage{booktabs,tabularx}
\usepackage{multirow}
\usepackage{array}

\usepackage{hyperref}

\usepackage[numbers,sort&compress]{natbib}

\newtheorem{definition}{Definition}
\newtheorem{assumption}{Assumption}
\newtheorem{theorem}{Theorem}

\newtheorem{proposition}{Proposition}

\newcommand{\eg}{{\em e.g.}}           
\newcommand{\ie}{{\em i.e.}}           
\newcommand{\etc}{{\em etc}}         

\newcommand{\re}{\color{black}}

\begin{document}

\title{Permutation-Equivariant 2D State Space Models: Theory and Canonical Architecture for Multivariate Time Series}

\author{Seungwoo Jeong
        and~Heung-Il~Suk,~\IEEEmembership{Senior Member,~IEEE}


\thanks{S. Jeong is with the Department of Artificial Intelligence, Korea University, Seoul 02841, Republic of Korea (e-mail: sw\_jeong@korea.ac.kr).}
\thanks{H.-I. Suk is with the Department of Artificial Intelligence, Korea University, Seoul 02841, Republic of Korea and the corresponding author (e-mail: hisuk@korea.ac.kr).}
}

\markboth{Journal of \LaTeX\ Class Files,~Vol.~14, No.~8, March~2026}%
{Jeong and Suk: Permutation-Equivariant 2D State Space Models: Theory and Canonical Architecture for Multivariate Time Series}


\maketitle

\begin{abstract}
Multivariate time series (MTS) modeling often implicitly imposes an artificial ordering over variables, violating the inherent exchangeability found in many real-world systems where no canonical variable axis exists. We formalize this limitation as a violation of the permutation symmetry principle and require state-space dynamics to be permutation-equivariant along the variable axis. In this work, we theoretically characterize the complete canonical form of linear variable coupling under this symmetry constraint. We prove that any permutation-equivariant linear 2D state-space system naturally decomposes into local self-dynamics and a global pooled interaction, rendering ordered recurrence not only unnecessary but structurally suboptimal. Motivated by this theoretical foundation, we introduce the Variable-Invariant Two-Dimensional State Space Model (VI 2D SSM), which realizes the canonical equivariant form via permutation-invariant aggregation. This formulation eliminates sequential dependency chains along the variable axis, reducing the dependency depth from $\mathcal{O}(C)$ to $\mathcal{O}(1)$ and simplifying stability analysis to two scalar modes. Furthermore, we propose VI 2D Mamba, a unified architecture integrating multi-scale temporal dynamics and spectral representations. Extensive experiments on forecasting, classification, and anomaly detection benchmarks demonstrate that our model achieves state-of-the-art performance with superior structural scalability, validating the theoretical necessity of symmetry-preserving 2D modeling.
\end{abstract}

\begin{IEEEkeywords}
Permutation Equivariant, Symmetry Principle, 2D State Space Model, Time Series Modeling
\end{IEEEkeywords}

\section{Introduction}
\IEEEPARstart{T}{ime} series data are ubiquitous across scientific and industrial domains, ranging from climate science~\citep{mudelsee2010climate}, finance~\citep{shi2024mambastock}, and biomedical signals~\citep{jeong2024deep}. Those data streams are characterized by complex temporal dynamics, seasonal regularities, and, crucially, inter-variable dependencies among multiple variables. In the multivariate setting, effective modeling requires capturing three distinct structural properties: (\romannumeral 1) the coexistence of long-term trends and short-term fluctuations, (\romannumeral 2) dynamic and evolving correlations among variables, and (\romannumeral 3) distinct spectral manifestations of these patterns in the frequency domain~\citep{qiu2025comprehensive}. Developing model architectures that simultaneously address these multi-scale temporal, inter-variable, and spectral patterns-while remaining computationally tractable over long horizons-remains a fundamental challenge in forecasting~\citep{zhou2021informer}, classification~\citep{jeong2023deep}, and anomaly detection~\citep{luo2024moderntcn}.

To address these challenges, deep learning architectures have evolved significantly. While Convolutional Neural Networks (CNNs) and Recurrent Neural Networks (RNNs) effectively capture local and autoregressive patterns, they often struggle to model long-range dependencies~\citep{shen2020timeseries, franceschi2019unsupervised}. Transformers have mitigated this issue via global self-attention~\citep{wen2023transformers}, yet their quadratic complexity with respect to sequence length limits scalability~\citep{wang2025mamba}. Recently, State Space Models (SSMs) have emerged as a compelling alternative, offering structured recurrences with linear time complexity~\citep{gu2021efficiently}. Notably, Mamba ~\citep{gu2023mamba} has demonstrated state-of-the-art performance by introducing input-selective state transitions. However, conventional SSMs are inherently one-dimensional; they evolve solely along the temporal axis, lacking an explicit mechanism to model the dynamic interactions between variables.

This limitation has motivated the development of Two-Dimensional SSMs (2D SSMs)~\citep{zhang20252dmamba, baron20242}, which extend recurrence to both temporal and variable axes. Despite their improved expressivity, existing 2D SSMs typically rely on sequential scanning along the variable axis (Fig.~\ref{fig:intro}). This approach implicitly treats variable indices as ordered coordinates, imposing a Markov chain structure where none may exist. While such an ordering is natural in spatial domains like image processing, multivariate time series generally satisfy exchangeability: variable indices act as identifiers rather than geometric coordinates. Consequently, sequential scanning introduces an artificial spatial inductive bias, rendering the model sensitive to variable permutation and inducing sequential dependency chains that inhibit parallel computation.

\begin{figure*}[t]
    \centering
    \includegraphics[width=1\linewidth]{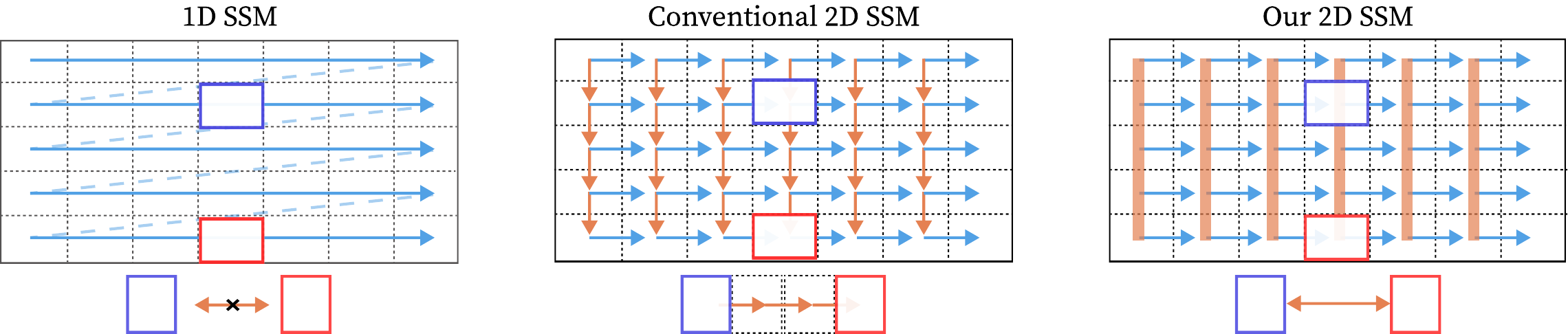}
    \caption{\textbf{Left:} 1D SSM models only along the temporal axis, overlooking dependencies across variables. \textbf{Middle:} Conventional 2D SSM captures inter-variable correlations through sequential scans, but still imposes artificial ordering and struggles with distant relationships. \textbf{Right:} Our method performs global aggregation over variables, enabling simultaneous and order-free modeling of inter-variable relationships over time.}
    \label{fig:intro}
\end{figure*}

In this work, we address this fundamental mismatch by formalizing the permutation symmetry principle for multivariate time series. We posit that a well-specified multivariate dynamical model must be permutation-equivariant along the variable axis. Under this symmetry constraint, we theoretically characterize the complete class of admissible linear inter-variable couplings. Specifically, we prove that any permutation-equivariant linear 2D state-space system necessarily decomposes into local self-dynamics and a global pooled interaction term. This characterization reveals that the ordered recurrence employed by prior 2D SSMs is not merely unnecessary but structurally incompatible with the inherent symmetry of the data.
Guided by this theoretical insight, we introduce the Variable-Invariant Two-Dimensional State Space Model (VI 2D SSM). Instead of performing sequential updates across variables, our formulation employs a permutation-invariant aggregation mechanism to construct a global descriptor that mediates inter-variable interactions. We demonstrate that this architecture is a direct realization of the canonical equivariant form derived from {\re our theoretical characterization}.

Beyond theoretical consistency, permutation symmetry yields significant computational advantages. By eliminating ordered recurrence, we reduce the dependency depth along the variable axis from $\mathcal{O}(C)$ to $\mathcal{O}(1)$, enabling fully parallelized computation across variables. Furthermore, the symmetry-constrained coupling simplifies stability analysis, allowing us to decompose the dynamics into mean and difference modes governed by just two scalar constraints. Building on this foundation, we propose VI 2D Mamba, a unified architecture that integrates the proposed VI 2D SSM with multi-scale temporal pathways (long- and short-term) and a frequency-domain branch via adaptive gating. This design ensures comprehensive coverage of heterogeneous temporal and spectral structures while strictly preserving permutation invariance.

Extensive experiments on forecasting, classification, and anomaly detection benchmarks demonstrate that the proposed method achieves competitive or superior performance compared to state-of-the-art baselines. These results validate the theoretical necessity of symmetry-preserving modeling and highlight its practical benefits in terms of scalability and robustness.

Our contributions are summarized as follows:

\begin{itemize}
\item \textbf{Permutation symmetry formalization for multivariate 2D SSMs.}
We formalize multivariate time-series modeling under the principle of variable-axis exchangeability. We establish permutation equivariance as a fundamental constraint for valid 2D state-space dynamics in non-spatial domains.

\item \textbf{Canonical characterization of equivariant coupling.}
We provide a theoretical proof that any permutation-equivariant linear inter-variable state coupling admits a unique canonical decomposition into self-dynamics and global pooled interaction. This result fully characterizes the admissible class of symmetry-compatible linear 2D dynamics.

\item \textbf{Symmetry-preserving 2D state-space realization.}
We introduce VI 2D SSM, which implements the canonical equivariant form via permutation-invariant aggregation. 
We show that this formulation eliminates artificial dependency chains and simplifies stability analysis to verifiable scalar conditions.

\item \textbf{Structural scalability and empirical validation.}
We demonstrate that our symmetry-constrained approach reduces variable-axis dependency depth from $\mathcal{O}(C)$ to $\mathcal{O}(1)$, enabling full parallelization. Empirical evaluations across diverse benchmarks confirm our proposed model’s superior efficiency and performance.
\end{itemize}

\section{Related Works}
\label{related work}
\subsection{Neural Models for Time Series Modeling}

Research on deep learning for multivariate time series has evolved from modeling purely temporal dynamics to explicitly capturing cross-variable dependencies~\citep{qiu2025comprehensive}. 
Early CNN- and RNN-based approaches extracted local patterns and autoregressive structures but were limited in modeling long-range dependencies~\citep{lai2018modeling, shen2020timeseries, franceschi2019unsupervised, he2019temporal}. 
Transformer-based architectures advanced global context modeling via self-attention, yet their quadratic complexity with sequence length poses scalability challenges despite sparsification and kernelization efforts~\citep{keles2023computational}. 
To improve efficiency, convolutional alternatives~\citep{luo2024moderntcn} and SSMs~\citep{gu2021efficiently, gu2023mamba, patro2024simba} have been proposed, offering linear or near-linear complexity.

However, most existing formulations evolve solely along the temporal axis and do not explicitly model time-varying cross-variable interactions. 
To address this, recent extensions of Mamba—including TimePro, CMamba~\citep{zeng2024cmamba}, Simba~\citep{patro2024simba}, and GrootVT~\citep{xiao2024grootvl}—incorporate variable-level priors, often through heuristic coupling mechanisms.

This has further motivated 2D SSMs, inspired by classical formulations such as Roesser’s model~\citep{kung1977new}. 
Recent variants~\citep{baron20242, zhang20252dmamba, behrouz2024chimera} couple temporal and variable dynamics to better capture multivariate correlations. 
Among them, Chimera~\citep{behrouz2024chimera} improves cross-variable modeling but relies on sequential scanning along the variable axis, thereby introducing ordering dependence and limiting parallelism.

These limitations motivate the need for symmetry-consistent 2D state-space formulations, which we address through permutation-invariant variable coupling.

\subsection{Permutation-Invariant and Equivariant Modeling}

Permutation symmetry has been extensively studied in machine learning under the framework of invariant and equivariant neural networks.
Deep Sets~\citep{zaheer2017deep} established that any permutation-invariant function over a set can be decomposed into a sum-aggregation followed by a transformation, providing a canonical representation for set functions.
Subsequent works extended this principle to permutation-equivariant mappings~\citep{bloem2020probabilistic}, as well as attention-based architectures such as Set Transformer~\citep{lee2019set}, which model interactions over unordered elements. Permutation equivariance has also been studied more broadly in group-equivariant neural networks, where symmetry constraints are enforced through structured operators and representation theory~\citep{cohen2016group, kondor2018generalization}.

Unlike prior works that focus on static function approximation over sets, our work studies permutation symmetry in the context of dynamical state-space systems for multivariate time series.
Rather than constructing invariant feature mappings, we characterize the complete class of linear permutation-equivariant inter-variable dynamics within 2D SSMs.

\section{Preliminaries}
\label{preliminaries}
We consider a multivariate time series $X \in \mathbb{R}^{C \times T}$ consisting of $C$ variates and $T$ time steps. We index the temporal dimension by $t \in \{1, \dots, T\}$ and the variable dimension by $c \in \{1, \dots, C\}$, where the observation at position $(t, c)$ is denoted by $x(t, c)$.

Our goal is to learn a representation mapping $f_{\theta}$ that captures both temporal dynamics and inter-variable dependencies. Motivated by the success of structured SSMs in sequence modeling, we adopt a 2D state-space formulation. This architecture maintains two coupled latent states: a horizontal state $h_h(t, c) \in \mathbb{R}^{d_h}$ describing evolution along the temporal axis, and a vertical state $h_v(t, c) \in \mathbb{R}^{d_v}$ describing evolution along the variable axis.

\subsection{Continuous-Time 2D State Space Models}
The theoretical foundation of 2D SSMs lies in the Roesser model~\citep{kung1977new}, which generalizes standard linear systems to two dimensions. In the continuous domain, the joint evolution of the horizontal and vertical states is governed by the following coupled partial differential equations (PDEs):
\begin{align}
\label{eq1}
    \frac{\partial h_h(t, c)}{\partial t} &= (A_h h_h(t, c),\;A_{hv}h_v(t, c)) + B_h x(t, c),
    \\ 
    \label{eq2}
    \frac{\partial h_v(t, c)}{\partial c} &= (A_v h_v(t, c),\; A_{vh}h_h(t, c)) + B_v x(t, c),
    \\ 
    \label{eq3}
    y(t, c) &= C_h h_h(t, c) + C_v h_v(t, c).
\end{align}
Here, $A_{\bullet}$ and $B_{\bullet}$ are state transition and input projection matrices, respectively. The cross-coupling terms $A_{hv}$ and $A_{vh}$ allow information to flow between the temporal and variable axes, enabling the model to capture complex spatiotemporal correlations.

\subsection{Discretization and the Ordering Problem}
To apply this continuous framework to digital signals, the system must be discretized. Recent selective-scan models, such as Mamba~\citep{gu2023mamba} and Chimera~\citep{behrouz2024chimera}, employ the Zero-Order Hold (ZOH) method. For the temporal axis, given a sampling step $\Delta_t$, the continuous parameters $(A_h, B_h)$ are transformed into discrete parameters $(\bar{A}_h, \bar{B}_h)$ via $\bar{A}_h = \exp(\Delta_t A_h)$ and $\bar{B}_h = (\Delta_t A_h)^{-1}(\exp(\Delta_t A_h) - I) \Delta_t B_h$.

This discretization naturally yields a recurrence relation. Existing 2D SSMs extend this recurrence to the variable axis as well:
\begin{align}
\label{eq4}
 h_h[t+1, c] &= \bar{A}_h h_h[t, c] + \bar{B}_h x[t+1,c], \\ 
 \label{eq5}
 h_v[t, c+1] &= \bar{A}_v h_v[t, c] + \bar{B}_v x[t, c+1],
 \\
 \label{eq6}
 y_{v, t} &= C_hh_h + C_vh_v,
\end{align}
While this sequential scanning is intuitive for spatial data (\eg, images) where $c$ and $c+1$ are geometric neighbors, it imposes a fundamentally flawed inductive bias on multivariate time series.
\begin{itemize}
    \item Artificial Ordering: Unlike the temporal axis, which possesses a strict causal order ($t \to t+1$), the variable axis in most multivariate datasets is exchangeable. The index $c$ serves merely as an identifier, not a coordinate. Imposing an update rule $h_v[c] \to h_v[c+1]$ creates an artificial dependency chain, making the model sensitive to the permutation of input variables.
    \item Sequential Bottleneck: The recurrence along $c$ forces sequential computation, preventing parallelization across variables and limiting scalability to high-dimensional systems.
\end{itemize}
These limitations motivate our proposed formulation, which replaces the ordered vertical recurrence with a permutation-invariant global coupling mechanism.

\section{Symmetry-Constrained 2D State Space Model}
\label{proposed method}
In this section, we formalize the symmetry principle underlying variable-invariant modeling and characterize the canonical form of permutation-equivariant inter-variable coupling. Building upon this theoretical foundation, we then construct the Variable-Invariant Two-Dimensional State Space Model (VI 2D SSM) as a structural realization of the derived canonical form. Finally, we present the VI 2D Mamba architecture, which extends this symmetry-preserving mechanism into a multi-scale spectral–temporal deep learning framework.

\subsection{Symmetry Principle and Canonical Coupling}
\label{theoretical}
We first establish the symmetry principle governing permutation-equivariant modeling and derive the canonical form of admissible inter-variable coupling. We then examine the resulting computational structure and stability properties, laying the theoretical groundwork for the Variable-Invariant 2D SSM introduced in the subsequent section.

\vspace{5pt}\noindent\textbf{Exchangeable Dynamical Systems.}
We formalize the modeling assumption under which permutation symmetry is desirable. We then characterize the class of admissible inter-variable dynamics consistent with permutation equivariance.

\begin{definition}[Variable-Axis Exchangeability]
    We say that the data-generating distribution of the multivariate time series  $X$ is exchangeable along the variable axis if $X \myeq X^\pi$, $\forall \pi\in S_C$. That is, permuting the variable indices does not change the joint distribution.
\end{definition}
This reflects the absence of a canonical ordering among variables. In many multivariate domains, variable indices serve as identifiers rather than ordered spatial coordinates. Thus, modeling assumptions that impose sequential structure along the variable axis introduce artificial inductive bias.

We therefore adopt the following modeling principle:
\begin{assumption}[Symmetry Principle]
    Under exchangeability, a well-specified model should be permutation-equivariant along the variable axis.
\end{assumption}

Conventional 2D state-space models impose ordered recurrence across variables, thereby introducing index-sensitive structure.
In contrast, the proposed formulation satisfies permutation equivariance by construction.

\vspace{5pt}\noindent\textbf{Expressive Power over Exchangeable Linear 2D Systems.}
We now characterize the class of linear variable-coupled state updates that satisfy permutation equivariance along the variable axis. Our goal is to determine the canonical form of linear dynamics that are compatible with the symmetry principle.

We consider linear vertical-state updates for the form 
\begin{align}
    h_v(t+1, c) = \sum^C_{j=1}M_{c, j}h_v(t, j)+\sum^C_{j=1}N_{c,j}x(t, j),
\end{align}
where $\{M, N\}\in\mathbb{R}^{C\times C}$ governs inter-variable state/input coupling. This represents the general linear update across the variable axis at a fixed time step.

Applying permutation to both sides of the update yields the algebraic condition: 
\begin{align}
    MP_\pi = P_\pi M,\quad NP_\pi = P_\pi N,\;\;\; \forall \pi \in S_C.
\end{align}
Thus, both $M$ and $N$ must commute with every permutation matrix. 

We now characterize the structure of permutation-equivariant
linear state coupling along the variable axis. For clarity, we analyze the state coupling matrix $M$ and omit the input coupling term.
\begin{theorem}[Characterization of Permutation-Commuting Matrices]
    Let $M \in \mathbb{R}^{C\times C}$. Then, 
    \begin{align}
        MP_\pi = P_\pi M \quad \forall \pi \in S_C 
    \end{align}
    if and only if 
    \begin{align}
        M = \alpha I_C + \beta \mathbf{11}^\top
    \end{align}
    for some scalars $\alpha, \beta \in \mathbb{R}$ where $I_C$ is the identity matrix and $\mathbf{1}$ is the column vector of all ones. 
    
    \noindent In particular, any linear permutation-equivariant state coupling must take the form
    \begin{align}
        h_v(t+1, c) = \alpha h_v(t, c) + \beta \sum^C_{j=1} h_v(t, j) + N x (t, c).
    \end{align}
\end{theorem}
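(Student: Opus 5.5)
The plan is to prove the two directions separately, with the converse ($\Rightarrow$) as the substantive part. Throughout, $P_\pi$ is the permutation matrix acting by $(P_\pi v)_i = v_{\pi^{-1}(i)}$, so that $(P_\pi M P_\pi^\top)_{i,j} = M_{\pi^{-1}(i),\pi^{-1}(j)}$.

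\textbf{Sufficiency.} Each $P_\pi$ satisfies $P_\pi \mathbf{1} = \mathbf{1}$ and $\mathbf{1}^\top P_\pi = \mathbf{1}^\top$, because every row and column of a permutation matrix contains exactly one $1$. Hence
\[
P_\pi(\alpha I_C + \beta\mathbf{11}^\top) = \alpha P_\pi + \beta\mathbf{11}^\top = (\alpha I_C + \beta \mathbf{11}^\top)P_\pi .
\]

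\textbf{Necessity.} Rewrite $MP_\pi = P_\pi M$ as $M = P_\pi M P_\pi^\top$. Entrywise this reads $M_{i,j} = M_{\pi(i),\pi(j)}$ for all $i,j$ and all $\pi\in S_C$. The task reduces to an orbit argument: determine the orbits of $S_C$ acting diagonally on index pairs $(i,j)\in\{1,\dots,C\}^2$.

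There are two orbits. The diagonal pairs $\{(i,i)\}$ form one orbit, since the transposition $(i\,k)$ sends $(i,i)$ to $(k,k)$. The off-diagonal pairs $\{(i,j): i\neq j\}$ form the other. To see that this set is a single orbit, take $(i,j)$ and $(k,l)$ with $i\neq j$ and $k\neq l$. Because the pairs are distinct, one can define an injective partial map $i\mapsto k$, $j\mapsto l$ and extend it to a full permutation. This gives $\pi$ with $\pi(i)=k$ and $\pi(j)=l$.

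Invariance of $M$ on orbits forces $M_{i,i}=a$ for all $i$ and $M_{i,j}=b$ for all $i\neq j$. Setting $\beta=b$ and $\alpha=a-b$ then yields $M=\alpha I_C+\beta\mathbf{11}^\top$.

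\textbf{Edge cases and main obstacle.} The case $C=1$ is degenerate: there is no off-diagonal orbit, so $\beta$ is arbitrary. Only the extension step for off-diagonal pairs needs real care, and this is mild. Transitivity requires $C\ge 2$, which is exactly the case in which there is anything to prove. As a fallback for this step, one could use only transpositions: combine $M P_{(i\,j)} = P_{(i\,j)} M$ over all transpositions, which generate $S_C$, and compare entries directly.

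\textbf{The update-rule consequence.} Substitute the characterized $M$ into the general linear update: $(Mh_v)(c) = \alpha h_v(t,c) + \beta\sum_j h_v(t,j)$. For the input term, the same theorem applied to $N$ gives the same two-parameter form. Writing it as $Nx(t,c)$ is shorthand for $\alpha' x(t,c) + \beta'\sum_j x(t,j)$, or for only the local term if the pooled input coupling is dropped. I would remark on this, since the displayed formula silently absorbs it.

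Finally, if $h_v$ is vector-valued ($d_v>1$) and coupling is through $M\otimes$ blocks, the same orbit argument applies blockwise with $\alpha,\beta$ replaced by $d_v\times d_v$ matrices. I would note this as a straightforward extension rather than prove it in full.
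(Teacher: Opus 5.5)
Your proof is correct and rests on the same idea as the paper's. Sufficiency uses $P_\pi\mathbf{1}=\mathbf{1}$ and $\mathbf{1}^\top P_\pi=\mathbf{1}^\top$, and necessity uses invariance under simultaneous row/column permutation to force all diagonal entries equal and all off-diagonal entries equal. The only difference is packaging: the paper restricts to transpositions $P_{ij}$ and compares entries of $MP_{ij}=P_{ij}M$, which is your stated fallback. Your orbit argument instead uses the full group and the transitivity of $S_C$ on ordered pairs of distinct indices. That gives equality of all off-diagonal entries in one step, without the chaining the paper leaves implicit (the paper's listed relations omit $M_{ij}=M_{ji}$, which is exactly what is needed when $C=2$).
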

The proof is provided in the Appendix~\ref{appendix:thm1}. This result characterizes the canonical form of linear inter-variable coupling under permutation equivariance. In the next section, we construct a state-space realization that adheres to this structure.
\begin{tcolorbox}[colback=gray!5,colframe=black!50,boxrule=0.5pt]
Permutation symmetry uniquely constrains admissible linear inter-variable dynamics to the canonical form 
$\alpha I_C + \beta \mathbf{1}\mathbf{1}^\top$.
\end{tcolorbox}

\vspace{5pt}\noindent\textbf{Computational Structure.}
Beyond expressivity, permutation symmetry fundamentally alters the computational structure along the variable axis.

\begin{theorem}[Reduction of Variable-Axis Dependency Depth]
Consider a 2D state-space system with $C$ variables.
\begin{enumerate}
    \item If inter-variable coupling is implemented via ordered recurrence (\eg, $h_v(t,c+1)$ depends on $h_v(t,c)$), then the update at each fixed time $t$ induces a dependency chain of length $C$ along the variable axis.
    \item In contrast, any formulation that mediates inter-variable interaction through a permutation-invariant aggregated descriptor $\psi(t)$ admits $\mathcal{O}(1)$ variable-axis dependency depth, since no sequential dependency is introduced once $\psi(t)$ is computed.
\end{enumerate}
\end{theorem}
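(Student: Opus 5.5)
The plan is to make ``dependency depth'' precise through a computation DAG and then treat the two parts separately. At a fixed time $t$, build a directed acyclic graph whose nodes are the variable-axis quantities $h_v(t,1),\dots,h_v(t,C)$, plus the aggregated descriptor $\psi(t)$ in part (2). Put an edge $u\to w$ whenever computing $w$ requires $u$ as an input. Inputs from the previous time step, $h_h(t-1,\cdot)$ and $h_v(t-1,\cdot)$, and the observations $x(t,\cdot)$ are treated as sources and are already available. The variable-axis dependency depth is defined as the length of the longest directed path in this graph. This depth lower-bounds the number of sequential rounds any schedule needs, even with unlimited parallel processors, and it is achievable by evaluating the nodes level by level.

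For part (1), I will use the ordered recurrence (5), or any update in which $h_v(t,c+1)$ genuinely depends on $h_v(t,c)$, for example through a nonzero $\bar A_v$. This puts the edges $h_v(t,c)\to h_v(t,c+1)$ for $c=1,\dots,C-1$ in the graph. Together they form the path $h_v(t,1)\to h_v(t,2)\to\cdots\to h_v(t,C)$, which has $C$ nodes and length $C-1=\Theta(C)$. I will show by induction on $c$ that $h_v(t,c)$ cannot be computed before round $c$. So every schedule needs at least $C$ sequential steps, which gives the stated chain of length $C$. The induction uses the assumption that the dependence is nontrivial: $h_v(t,c+1)$ is not a function of the sources alone. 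Unrolling the recurrence shows that $h_v(t,c+1)$ depends on $\bar A_v^{\,c}$ times earlier terms, and this is nonzero when $\bar A_v\neq 0$.

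For part (2), I will take the form $h_v(t+1,c)=\phi\bigl(h_v(t,c),x(t,c),\psi(t)\bigr)$ with $\psi(t)=\rho\bigl(\sum_j \varphi(h_v(t,j),x(t,j))\bigr)$, or the linear case $\psi(t)=\sum_j h_v(t,j)$ coming from the canonical form of Theorem~1. The graph then has layer 0, consisting of the sources. Layer 1 is $\psi(t)$, which depends only on sources. Layer 2 contains all $h_v(t+1,c)$; each depends only on sources and $\psi(t)$, and no $h_v(t+1,c)$ depends on any other $h_v(t+1,c')$. The longest path therefore has length $2$, independent of $C$, so the depth is $\mathcal{O}(1)$. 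Once $\psi(t)$ is known, the $C$ per-variable updates are independent and can run fully in parallel.

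The main obstacle is that the sum inside $\psi(t)$ is itself a $C$-ary operation. If it is broken into binary additions, a naive left fold would again create a length-$C$ chain. I will handle this in two ways. First, by permutation invariance the sum can be evaluated in any order, in particular as a balanced tree of depth $\lceil\log_2 C\rceil$, or as a single reduction primitive counted as unit depth. Second, and more importantly, this is not an ordered dependency among the states $h_v(\cdot,c)$: it is one collective reduction. Counting reductions as one parallel step gives the claimed $\mathcal{O}(1)$, and in the binary-operation model the depth is still $\mathcal{O}(\log C)$, versus the $\Theta(C)$ forced in part (1). I will state this convention explicitly in the definition of depth so that the comparison in the statement is rigorous.
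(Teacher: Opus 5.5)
Your argument is the paper's own. Like the paper, you define variable-axis dependency depth as the longest directed path in the computation graph at a fixed time step, and you exhibit the chain $h_v(t,1)\to\cdots\to h_v(t,C)$ for ordered recurrence. You also observe that once $\psi(t)$ is available, every per-variable update depends only on local quantities and $\psi(t)$. The caveat about the reduction matches too: $\mathcal{O}(1)$ under ideal parallelism, $\mathcal{O}(\log C)$ as a tree. Relative to that graph-based definition your proof is correct; counting $C-1$ edges versus the paper's ``length $C$'' is only a convention.

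You should drop the semantic strengthening in part (1), namely the induction showing that $h_v(t,c)$ ``cannot be computed before round $c$'' by any schedule.
\begin{itemize}
\item For schedules of the chain DAG, this is just the longest-path bound and needs no induction. For arbitrary algorithms, it is false.
\item The hypothesis you use is also ill-posed. Every $h_v(t,c)$ is a function of the sources alone; that is exactly what unrolling shows.
\item $\bar A_v\neq 0$ does not imply $\bar A_v^{\,c}\neq 0$. If $\bar A_v^{2}=0$, then $h_v(t,c+1)=\bar A_v\bar B_v x(t,c)+\bar B_v x(t,c+1)$, which has constant depth.
\item ZOH repairs that detail, since $e^{\Delta A_v}$ is invertible, but it does not rescue the conclusion. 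The affine maps $h\mapsto \bar A_v h+\bar B_v x(t,c)$ compose associatively. So the whole chain can be evaluated by a parallel prefix scan in $\mathcal{O}(\log C)$ depth, as selective-scan implementations already do along time.
\end{itemize}

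Your final comparison is therefore asymmetric. You let the sum inside $\psi(t)$ be reassociated into a balanced tree, yet you call $\Theta(C)$ forced for a recurrence that the same latitude brings down to $\mathcal{O}(\log C)$. The separation between $C$ and $\mathcal{O}(1)$ (or $\mathcal{O}(\log C)$) is a statement about each formulation's own computation graph. That is how the theorem and the paper's proof frame it, so state it that way and omit the lower-bound claim.
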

This structural reduction directly enables parallel updates across variables, a property that will be leveraged by the proposed VI 2D SSM and validated empirically in Section~\ref{analysis:eff}.

\vspace{5pt}\noindent\textbf{Stability of Discretized Dynamics.} We analyze stability of the discretized state updates under permutation-equivariant coupling.

\noindent Consider the continuous-time vertical state dynamics
\begin{align}
    \dot{h}_v(t,c) = A_v h_v(t,c),
\end{align}
where $A_v \in \mathbb{R}^{d \times d}$.

The discretized update with step size $\Delta > 0$ under ZOH is
\begin{align}
    h_v[t+1,c] = \bar A_v h_v[t,c], 
    \quad 
    \bar A_v = e^{\Delta A_v}.
\end{align}

\begin{theorem}[Discrete Stability under ZOH]
If the continuous-time matrix $A_v$ is Hurwitz, i.e.,
\[
\Re(\lambda_i(A_v)) < 0 \quad \forall i,
\]
then for any $\Delta > 0$, the discretized matrix
\[
\bar A_v = e^{\Delta A_v}
\]
satisfies
\[
\rho(\bar A_v) < 1,
\]
where $\rho(\cdot)$ denotes the spectral radius.
\end{theorem}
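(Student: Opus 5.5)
The plan is to relate the spectrum of $\bar A_v = e^{\Delta A_v}$ to that of $A_v$ through the spectral mapping property of the matrix exponential. We will show that the eigenvalues of $e^{\Delta A_v}$ are exactly $e^{\Delta \lambda_i}$, where $\lambda_i = \lambda_i(A_v)$, counted with algebraic multiplicity.

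\textbf{Step 1: eigenvalues of the exponential.} Take a Jordan decomposition (or a Schur triangularization, to stay closer to real arithmetic) $A_v = S J S^{-1}$ over $\mathbb{C}$. Then $e^{\Delta A_v} = S e^{\Delta J} S^{-1}$.

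\begin{itemize}
\item With a Schur form $A_v = U T U^{*}$, $T$ upper triangular, the power series $e^{\Delta T} = \sum_k (\Delta T)^k/k!$ is again upper triangular.
\item Its diagonal entries are $\sum_k (\Delta \lambda_i)^k/k! = e^{\Delta\lambda_i}$, because powers of an upper triangular matrix have diagonal entries equal to the corresponding powers of the diagonal.
\item Since $e^{\Delta A_v}$ is similar to $e^{\Delta T}$, its eigenvalues are precisely $\{e^{\Delta \lambda_i}\}_{i=1}^d$.
\end{itemize}

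\textbf{Step 2: moduli.} For any complex $\lambda$, $|e^{\Delta\lambda}| = e^{\Delta \Re(\lambda)}$, since $|e^{i\Delta \Im\lambda}| = 1$. Therefore
\[
\rho(\bar A_v) = \max_i e^{\Delta \Re(\lambda_i)} = e^{\Delta \max_i \Re(\lambda_i)}.
\]

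\textbf{Step 3: conclusion.} By the Hurwitz hypothesis, $\max_i \Re(\lambda_i) < 0$; the maximum is over finitely many eigenvalues, so it is attained and strictly negative. Since $\Delta > 0$, the exponent is strictly negative, and hence $\rho(\bar A_v) < 1$ for every $\Delta>0$.

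I expect no real obstacle. The only point needing care is Step 1, the justification that the spectrum of $e^{\Delta A_v}$ is the image of the spectrum of $A_v$ when $A_v$ is not diagonalizable. Using the triangular (Schur) form avoids Jordan-block computations entirely, because only the diagonal of $e^{\Delta T}$ matters.

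It is also worth remarking afterward that $\rho(\bar A_v)<1$ gives $\|\bar A_v^k\| \to 0$, i.e.\ asymptotic stability of the recurrence. Under the canonical coupling of Theorem~1, the same argument applies to the two scalar modes $\alpha$ and $\alpha + C\beta$, the eigenvalues of $\alpha I_C + \beta\mathbf{1}\mathbf{1}^\top$ on $\mathbf{1}^\perp$ and on $\mathrm{span}(\mathbf{1})$ respectively.
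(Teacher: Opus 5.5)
Your proof is correct and follows the paper's argument: spectral mapping gives eigenvalues $e^{\Delta\lambda_i}$, then $|e^{\Delta\lambda_i}| = e^{\Delta\Re(\lambda_i)} < 1$. The only difference is that you justify the spectral mapping step via Schur triangularization, which also covers non-diagonalizable $A_v$, whereas the paper simply cites the spectral mapping theorem.
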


Under the canonical permutation-equivariant coupling
\begin{align}
M = \alpha I_C + \beta \mathbf{1}\mathbf{1}^\top,
\end{align}
the full state update matrix decomposes into two invariant subspaces:
\begin{enumerate}
    \item The zero-sum subspace (orthogonal to $\mathbf{1}$),
    \item The span of $\mathbf{1}$.
\end{enumerate}
The eigenvalues along these subspaces are:
\begin{align}
\lambda_{\text{diff}} = \alpha,
\quad
\lambda_{\text{mean}} = \alpha + C\beta.
\end{align}
Hence stability requires
\begin{align}
|\alpha| < 1,
\quad
|\alpha + C\beta| < 1.
\end{align}
Thus, permutation symmetry reduces the stability analysis to two scalar constraints corresponding to the mean and difference modes. Importantly, symmetry does not guarantee stability per se. Rather, it constrains the dynamics to evolve within a two-mode invariant decomposition, reducing structural complexity and simplifying both stability and optimization analysis.

Taken together, the preceding analyses establish that permutation symmetry uniquely constrains admissible linear inter-variable dynamics to the canonical form 
$\alpha I_C + \beta \mathbf{1}\mathbf{1}^\top$. 
This characterization reveals that equivariant coupling necessarily decomposes into local self-dynamics and a global interaction term.
In the next section, we construct a state-space realization that adheres to this canonical structure, yielding a symmetry-consistent and computationally efficient formulation of inter-variable dynamics.

\begin{figure*}
    \centering
    \includegraphics[width=\linewidth]{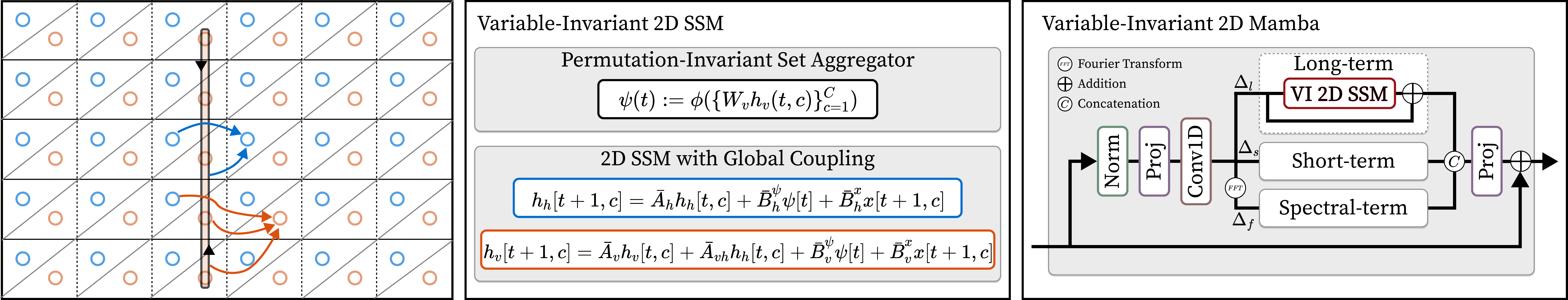}
    \caption{Overview of the proposed Variable-Invariant 2D Mamba. \textbf{Left:} Variable-Invariant 2D SSM with global coupling. A permutation-invariant set aggregator enforces invariance to variable ordering, after which the 2D state space model captures jointly coupled dynamics along temporal and variable dimensions. The globally coupled state transitions enable structured long-range dependency modeling while preserving symmetry across variables. \textbf{Right:} Variable-Invariant 2D Mamba architecture. The framework decomposes representation learning into complementary long-term, short-term, and spectral components. These components are fused to form a unified representation that simultaneously captures global temporal dynamics, local variations, and frequency-domain structure under a variable-invariant formulation.}
\end{figure*}

\subsection{Variable-Invariant 2D SSM}
\label{2dssm}
\subsubsection{Global Interaction Field Formulation}
Let $z(t, :) = [z(t, 1), \ldots, z(t, C)] \in \mathbb{R}^{C \times d_z}$ denote the ensemble of a variable-indexed representation at time $t$. To eliminate the artificial ordering imposed by sequential scanning ($c \to c+1$), we introduce a global interaction field $\psi(t)$, derived via a permutation-invariant aggregation:
\begin{align}
\label{eq7}
\psi(t) := \phi(\{W_v z(t, c)\}_{c=1}^C),
\end{align}
where $W_{v}$ is a learnable projection matrix and $\phi(\cdot)$ is a permutation-invariant set function (e.g., mean, sum, or attention-based pooling) such that $\phi(\{z_c\}) = \phi(\{z_{\pi(c)}\})$ for any permutation $\pi \in S_C$.

\begin{proposition}[Permutation invariance of the pooled summary]
\label{proposition:1}
Let $\pi$ be any permutation of $\{1,\ldots, C\}$ and define 
$\psi_\pi(t) := \phi\!\left(\{W_v z(t,\pi(c))\}_{c=1}^C\right)$.
Assume (i) $W_v$ is variable-shared (independent of $c$), and (ii) $\phi$ is permutation-invariant on multisets.
Then $\psi_\pi(t) = \psi(t)$ for all $t$.
\end{proposition}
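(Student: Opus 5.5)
The plan is to show that the multiset fed to $\phi$ in the definition of $\psi_\pi(t)$ is exactly the same multiset fed to $\phi$ in the definition of $\psi(t)$ in (\ref{eq7}). Invariance of $\phi$ on multisets then gives the claim.

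First, fix $t$ and write $u_c := W_v z(t,c) \in \mathbb{R}^{d}$ for $c=1,\ldots,C$. Assumption (i) says $W_v$ is shared across variables, so it does not depend on $c$. Hence the $c$-th element of the permuted collection is
\begin{align*}
W_v z(t,\pi(c)) = u_{\pi(c)} .
\end{align*}
This step is where (i) is genuinely needed. If the projection were variable-specific, say $W_v^{(c)}$, the permuted element would be $W_v^{(c)} z(t,\pi(c))$. In general that is not $u_{\pi(c)} = W_v^{(\pi(c))} z(t,\pi(c))$, and the argument would fail.

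Second, since $\pi$ is a bijection of $\{1,\ldots,C\}$, the indexed family $(u_{\pi(c)})_{c=1}^C$ is a reordering of $(u_c)_{c=1}^C$. To make "same multiset" precise, compare multiplicities: for any vector $v$,
\begin{align*}
\#\{c : u_{\pi(c)} = v\} = \#\{c' : u_{c'} = v\},
\end{align*}
which follows from the change of index $c' = \pi(c)$. Therefore $\{u_{\pi(c)}\}_{c=1}^C$ and $\{u_c\}_{c=1}^C$ coincide as multisets. Equivalently, in the notation of (\ref{eq7}), $\phi(\{u_{\pi(c)}\}) = \phi(\{u_c\})$ is precisely the defining invariance property of $\phi$.

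Third, apply assumption (ii) to get $\psi_\pi(t) = \phi(\{u_{\pi(c)}\}) = \phi(\{u_c\}) = \psi(t)$. Since $t$ was arbitrary, the identity holds for all $t$.

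None of these steps is a real obstacle. The only care required is to state explicitly where the variable-sharing of $W_v$ is used, and to treat the collection as a multiset rather than a set, since duplicate vectors $u_c$ may occur. For concreteness one can remark that the standard choices satisfy (ii): sum and mean pooling are invariant by commutativity of addition, and attention pooling with a shared query, $\sum_c \mathrm{softmax}_c(q^\top u_c)\, u_c$, is likewise a symmetric function of the $u_c$.
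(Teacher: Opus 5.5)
Your proposal is correct and follows essentially the same route as the paper: both observe that $\{W_v z(t,\pi(c))\}_{c=1}^C$ and $\{W_v z(t,c)\}_{c=1}^C$ are the same multiset because $\pi$ is a bijection, and then apply the permutation invariance of $\phi$. Your version is slightly more careful, since it makes explicit where the variable-sharing of $W_v$ is used and checks the multiset equality by comparing multiplicities; the paper additionally appeals to the equivariance $z^\pi(t,c) = z(t,\pi(c))$ of the upstream representation, which the statement as posed does not require.
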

This field $\psi(t)$ serves as a global descriptor that mediates information exchange across variables without imposing any sequential ordering. Once $\psi(t)$ is computed via a permutation-invariant aggregation, the subsequent state updates for different variables can be executed in parallel across the vertical axis. The full proof is provided in the Appendix~\ref{appendix:proof_p1}.

\subsubsection{Coupled Continuous-Time Dynamics}
We redefine the 2D state-space dynamics by coupling the temporal (horizontal) and variable (vertical) states through this global field. Unlike the Roesser model, which defines evolution along both axes ($t$ and $c$), our model evolves strictly along the temporal axis $t$, while the variable axis is governed by instantaneous global coupling:

\begin{align}
    \frac{\partial h_h(t, c)}{\partial t}
= &A_h h_h(t, c) + A_{h\psi}\psi(t)+ B_h x(t, c), \label{eq9}\\\nonumber
\frac{\partial h_v(t, c)}{\partial t}
= &A_v h_v(t, c) + A_{v\psi}\psi(t) \\
& + A_{vh} h_h(t, c) + B_v x(t,c).
   \label{eq10}
\end{align}

Here, $A_{h\psi}$ and $A_{v\psi}$ are coupling matrices that inject global context into local dynamics. Crucially, the derivative $\partial / \partial c$ is removed. Each variable $c$ evolves according to its own local history ($h_h, h_v$) and the global context $\psi(t)$, ensuring that the update rule is identical and exchangeable for all $c$.
\begin{proposition}
\label{proposition:2}
    Assume $A_h, A_v, A_{h\psi}, A_{v\psi}, A_{vh}, B_h, B_v$ are variable-shared (independent of $c$), and let $\psi(t)$ be defined as above with a permutation-invariant $\phi$ (Prop.~\ref{proposition:1}). For any permutation $\pi$ of $\{1, \ldots, C\}$, consider the permuted inputs $x^\pi(t, c) :=x(t, \pi(c))$ and permuted initial states $h^\pi(0, c):=h(0, \pi(c))$. Then the unique solutions to Eq. \ref{eq9}-\ref{eq10} satisfy $h^\pi_h(t, c) = h_h(t, \pi(c))$, $h^\pi_v(t, c)=h_v(t, \pi(c))$ $\forall\; t, c$,
    i.e., the dynamics are permutation equivariant with respect to the variable axis.
\end{proposition}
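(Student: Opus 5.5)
The proof is a uniqueness argument. I will show that the permuted trajectory $\tilde h_h(t,c) := h_h(t,\pi(c))$, $\tilde h_v(t,c) := h_v(t,\pi(c))$ satisfies the same initial value problem (Eqs.~\ref{eq9}--\ref{eq10}) as $(h^\pi_h, h^\pi_v)$, and then conclude equality from uniqueness of solutions. Throughout, I stack all variables into the joint state $H(t) = (h_h(t,c), h_v(t,c))_{c=1}^C$. This makes Eqs.~\ref{eq9}--\ref{eq10} a finite-dimensional ODE $\dot H = F(H, x(t))$.

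\textbf{Step 1: equivariance of the global field.} The field $\psi(t)$ is built from representations $z(t,c)$ that are variable-indexed. I take these to be equivariant functions of the state and input at variable $c$: either $x(t,c)$ itself, or $h_h(t,c)$, $h_v(t,c)$ through a variable-shared map. Under the permuted data, the field is therefore $\phi(\{W_v z(t,\pi(c))\}_c)$. By Proposition~\ref{proposition:1}, this equals $\psi(t)$, because $W_v$ is shared and $\phi$ is invariant on multisets. The permuted system is thus driven by the same global field. If $\psi$ depends on the state, this holds along the candidate trajectory $\tilde H$, which is all that Step 2 needs.

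\textbf{Step 2: the candidate satisfies the permuted system.} Differentiate $\tilde h_h(t,c) = h_h(t,\pi(c))$ and apply Eq.~\ref{eq9} at index $\pi(c)$:
\[
\partial_t \tilde h_h(t,c) = A_h \tilde h_h(t,c) + A_{h\psi}\psi(t) + B_h x^\pi(t,c).
\]
This substitution is valid only because the matrices are $c$-independent. It is exactly the $c$-th equation of the permuted system, whose field equals $\psi(t)$ by Step 1. The same argument applies to Eq.~\ref{eq10}, including the cross term $A_{vh}\tilde h_h(t,c)$. The initial conditions also agree, since $\tilde h(0,c) = h(0,\pi(c)) = h^\pi(0,c)$.

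\textbf{Step 3: uniqueness.} The main obstacle is justifying uniqueness when $\psi$ depends on the state through a nonlinear $\phi$ such as attention pooling. In that case I will assume that $\phi$ and the map producing $z$ are locally Lipschitz. Then $F$ is locally Lipschitz in $H$ and, for piecewise continuous $x$, measurable in $t$, so Picard--Lindelöf (Carathéodory form) gives a unique solution. If $\psi$ depends only on the input, the system is linear with a known forcing term, and uniqueness is immediate.

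Either way, $\tilde H$ and $H^\pi$ solve the same well-posed initial value problem, so $h^\pi_h(t,c) = h_h(t,\pi(c))$ and $h^\pi_v(t,c) = h_v(t,\pi(c))$ for all $t$ and $c$. Equivalently, $P_\pi$ acting on the stacked state commutes with the flow of $F$.
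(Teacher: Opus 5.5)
Your proposal is correct and follows the paper's proof essentially step for step. Both define the relabeled trajectory $\tilde h(t,c)=h(t,\pi(c))$, use Proposition~\ref{proposition:1} to show that the pooled field evaluated along it is unchanged, check by differentiation with variable-shared matrices that $\tilde h$ solves the permuted initial value problem, and conclude by uniqueness. Your uniqueness step is more careful than the paper's: the paper simply asserts that the system is linear time-varying and globally Lipschitz, which holds for a state-dependent mean/sum pooling but not automatically for a nonlinear $\phi$ such as attention, where your local-Lipschitz assumption is what is actually needed.
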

With variable-shared parameters, the permuted system is a mere relabeling of indices; hence, the solutions are correspondingly permuted. The full proof is provided in the Appendix~\ref{appendix:proof_p2}.

\subsubsection{Discrete Realization via Parallel Scanning}
Discretizing the system using ZOH method with sampling step $\Delta>0$, we obtain the following recurrence:
\begin{align}
    h_h[t, c]
&= \bar{A}_h h_h[t-1, c]
   + \bar{B}_h^{\psi} \psi[t]
   + \bar{B}_h^x x[t, c] \label{eq12}\\ \nonumber
h_v[t, c]
&= \bar{A}_v h_v[t-1, c]
   + \bar{B}_v^{\psi} \psi[t] \\
&\quad
   + \bar{A}_{vh} h_h[t-1, c]
   + \bar{B}_v^x x[t, c].
   \label{eq13}
\end{align}
This formulation fundamentally alters the computational graph. Detailed derivation and convolution formulation can be found in the Appendix~\ref{appendix:discrete}.

\vspace{5pt}\noindent\textbf{Data-Dependent Parameters.}
Recent SSM advances (\eg, Mamba) emphasize input-conditioned dynamics, where transition/measurement vectors and step sizes adapt to the current input. Furthermore, models need to adaptively learn the importance of temporal variations based on the data. We follow this principle and make the discrete SSM parameters data dependent: 
\begin{equation}
\begin{aligned}
B[t,c] &= f^B_\theta\!\big(x[t,c]\big), \\
C[t,c] &= f^C_\theta\!\big(x[t,c]\big), \\
\Delta[t] &= f^\delta_\theta\!\big(x[t,:]\big).
\end{aligned}
\label{eq15}
\end{equation}
Here, $f_\theta$ is a pointwise map applied per variable and time; their outputs have the same shape as the corresponding SSM vectors.

\vspace{5pt}\noindent\textbf{2D Selective Scan.} 
Although our model is a 2D SSM, it exploits the variable-invariant construction to avoid an explicit vertical scan. At each time $t$, we compute the permutation-invariant summary $\psi[t]$ and then perform a hybrid scan:
\begin{itemize}
    \item a 1D selective scan along the time axis (causal, efficient, input-conditioned via $\Delta[t], B[t,c], C[t,c]$,
    \item coupled with global pooling on the variable axis to transmit cross-variable information.
\end{itemize}
Note that while conventional 2D SSMs require a nested loop of complexity $\mathcal{O}(T \times C)$ due to sequential dependencies, our model decouples the variable axis. The global term $\psi[t]$ is computed via parallel reduction, allowing the temporal scan to proceed independently for each variable. This reduces the effective dependency depth along the variable axis to $\mathcal{O}(1)$.

\begin{algorithm}[th]
\caption{Variable-Invariant 2D SSM (Forward)}
\label{alg:vi2dssm}
\DontPrintSemicolon
\SetKwInOut{Input}{Input}
\SetKwInOut{Output}{Output}
\Input{$X \in \mathbb{R}^{C \times T}$ \quad ($C$ variables over $T$ steps)}
\Input{$\phi$ \quad (Permutation-invariant aggregator)}
\Output{$Y$}
Initialize $h_h$, $h_v$\;
\For{$t \gets 1$ \KwTo $T$}{ \tcp{single temporal scan}
  $\psi[t] \gets \phi\!\left(\{W_v\,z[t-1,c]\}_{c=1}^C\right)
$\;
  \For{$c \gets 1$ \KwTo $C$}{
    $h_h[t,c] \gets \bar{A}_h h_h[t-1,c] + \bar{B}^x_{h} X[t,c] + \bar{B}^\psi_{h} \psi[t]$\;
    $h_v[t,c] \gets \bar{A}_v h_v[t-1,c] + \bar{A}_{vh} h_h[t,c] + \bar{B}^x_{v} X[t,c] + \bar{B}^\psi_{v} \psi[t]$\;
  }
}
$Y \gets C \,[\,h_h,\, h_v\,]$\;
\Return $Y$\;
\end{algorithm}

\subsection{Variable-Invariant 2D Mamba Architecture}
Building on the theoretical foundation of the VI 2D SSM, we propose the Variable-Invariant 2D Mamba architecture. To capture the multi-scale nature of time series, we integrate three complemenatry pathways: multi-dcale temporal pathways and spectral-domain pathway.
\subsubsection{Multi-Scale Temporal Pathways}  
Time-series data inherently exhibit structure at multiple scales. Depending on the task and domain, informative signals may range from local, fine-grained dependencies between adjacent time points to long-term, global trends spanning extended horizons. To capture such patterns, recent research has emphasized multi-scale temporal modeling as a central design principle~\citep{behrouz2024chimera, karadag2025ms}.  

In our framework, multi-scale dynamics are realized by adjusting the discretization parameter $\Delta$ during SSM discretization. Following prior work~\citep{behrouz2024chimera, karadag2025ms}, $\Delta$ can be interpreted as the effective time resolution or sampling rate in continuous-time formulations.
We instantiate two parallel VI 2D SSM blocks with distinct discretization step sizes.
\begin{itemize}
    \item Long-term Branch ($\Delta_l$): A coarse-grained branch (large $\Delta$) that filters out high-frequency noise to capture global trends and seasonalities.
    \item Short-term Branch ($\Delta_s$): A fine-grained branch (small $\Delta$) focused on capturing rapid fluctuations and transient events.
\end{itemize}
This design mimics a continuous wavelet transform, resolving features at different temporal resolutions. Larger $\Delta$ values correspond to slower state updates, effectively capturing long-range trends, whereas smaller $\Delta$ values yield faster dynamics suited for modeling rapid fluctuations. 

\subsubsection{Spectral-Domain Pathway}  
For many time-series modalities, the frequency domain plays a central role in characterizing the underlying signal~\citep{qiu2025comprehensive}. Frequency-domain features complement time-domain representations: low-frequency components often capture global, slowly varying dynamics, whereas high-frequency components reveal localized, transient variations. To leverage this complementary structure, we introduce a frequency-domain pathway into our architecture.  

Concretely, the input sequence is transformed into the frequency domain using the discrete Fourier transform along the temporal axis. 
For real-valued signals, only the non-redundant spectral components are retained, and the real and imaginary parts are concatenated to form the frequency representation. 
This yields a tensor with the same dimensionality as the original input, which is then processed by the proposed 2D SSM operating along the frequency axis instead of the temporal axis.


\vspace{5pt}\noindent\textbf{Interpretation of SSM in the Frequency Domain.}  
In this setting, the semantics of the SSM differ from the temporal case. Rather than modeling evolution over time, the states evolve across frequency bands, capturing dependencies between variables across spectral ranges. Importantly, the update operates as a continuous-time SSM applied to the frequency axis, where the discretization step size $\Delta_f$ governs how densely the underlying spectral ODE is sampled. Unlike time, frequency does not exhibit causality; it spans from low to high values, with most signal energy concentrated at low frequencies and increasingly sparse but informative oscillatory patterns at higher frequencies.

This spectral imbalance imposes a numerical challenge. If $\Delta_f$ is large, the discretization becomes too coarse; the exponential term in the SSM update inaccurately scales higher-frequency modes, causing instability, aliasing, and over-attenuation of small but critical spectral components. Setting $\Delta_f$ sufficiently small stabilizes the SSM dynamics, suppresses aliasing, and increases the effective resolution in the high-frequency range, ensuring that transient or oscillatory structure is preserved despite low energy levels\footnote{In practice, we find that values of $\Delta_f$ within $[0.001, 0.01]$ offer a robust trade-off between spectral fidelity and numerical stability.}.

\subsubsection{Adaptive Gating and Fusion}  
Our final architecture integrates information from three complementary pathways: the long-term temporal branch ($\Delta_l$), the short-term temporal branch ($\Delta_s$), and the frequency-domain branch ($\Delta_f$). $$h_{\text{fused}} = \text{Gate}(h_{\text{long}}, h_{\text{short}}, h_{\text{spec}})$$ This learnable gating mechanism allows the model to dynamically weight the importance of trend, transient, and spectral features depending on the input instance, providing a robust representation for downstream tasks.


\begin{table*}[ht]
\centering
\caption{Experimental results of long-term time-series forecasting. Best results are highlighted in \textbf{bold}, and second-best results are \underline{underlined}.}
\label{exp:long_term_forecasting}
\setlength{\tabcolsep}{2pt}
\renewcommand{\arraystretch}{1.2}
\resizebox{\textwidth}{!}{
\begin{tabular}{l|cccccccccccccccccccccc}
\toprule
\multirow{2}{*}{Dataset} & \multicolumn{2}{c}{Ours} & \multicolumn{2}{c}{Chimera} & \multicolumn{2}{c}{TimePro} & \multicolumn{2}{c}{Simba} & \multicolumn{2}{c}{TCN} & \multicolumn{2}{c}{iTransformer} & \multicolumn{2}{c}{RLinear} & \multicolumn{2}{c}{PatchTST} & \multicolumn{2}{c}{Crossformer} & \multicolumn{2}{c}{TiDE} & \multicolumn{2}{c}{TimesNet} \\
\cmidrule{2-23}
 & MSE & MAE & MSE & MAE & MSE & MAE & MSE & MAE & MSE & MAE & MSE & MAE & MSE & MAE & MSE & MAE & MSE & MAE & MSE & MAE & MSE & MAE \\
\midrule
ETTm1 & \textbf{0.345} & \textbf{0.378} & 0.355 & \underline{0.381} & 0.391 & 0.400 & 0.383 & 0.396 & \underline{0.351} & \underline{0.381} & 0.407 & 0.410 & 0.414 & 0.407 & 0.387 & 0.400 & 0.513 & 0.496 & 0.419 & 0.419 & 0.400 & 0.406\\
\midrule
ETTm2 & 0.258 & \underline{0.317} & \textbf{0.252} & \underline{0.317}  & 0.281 & 0.326 & 0.271 & 0.327 & \underline{0.253} & \textbf{0.314} & 0.288 & 0.332 & 0.286 & 0.327 & 0.281 & 0.326 & 0.757 & 0.610 & 0.358 & 0.404 & 0.291 & 0.333\\
\midrule
ETTh1 & \textbf{0.397} & \textbf{0.419}  & 0.408 & 0.425 & 0.438 & 0.438 & 0.441 & 0.432 & \underline{0.404} & \underline{0.420} & 0.454 & 0.447 & 0.446 & 0.434 & 0.469 & 0.454 & 0.529 & 0.522 & 0.541 & 0.507 & 0.458 & 0.450\\
\midrule
ETTh2 & \textbf{0.313} & \textbf{0.372} & \underline{0.321} & \underline{0.377} 
 & 0.377 & 0.403 & 0.361 & 0.391 & 0.322 & 0.379 & 0.383 & 0.407 & 0.374 & 0.398 & 0.387 & 0.407 & 0.942 & 0.684 & 0.611 & 0.550 & 0.414 & 0.427\\
\midrule
ECL & 0.158 & 0.254 & \textbf{0.154} & \textbf{0.249}  & 0.169 & 0.262 & 0.185 & 0.274 & \underline{0.156} & \underline{0.253} & 0.178 & 0.270 & 0.219 & 0.298 & 0.205 & 0.290 & 0.244 & 0.334 & 0.251 & 0.344 & 0.192 & 0.295\\
\midrule
Exchange & \underline{0.311} & 0.375 & \underline{0.311} & \textbf{0.358} & 0.352 & 0.399 & - & - & \textbf{0.302} & \underline{0.366} & 0.360 & 0.403 & 0.378 & 0.417 & 0.367 & 0.404 & 0.940 & 0.707 & 0.370 & 0.413 & 0.416 & 0.443 \\
\midrule
Traffic & \textbf{0.398} & \underline{0.276} & \underline{0.403} & 0.286 & - & -&  0.493 & 0.291 & \textbf{0.398} & \textbf{0.270} & 0.428 & 0.282 & 0.626 & 0.378 & 0.481 & 0.304 & 0.550 & 0.304 & 0.760 & 0.473 & 0.620 & 0.336 \\
\midrule
Weather & 0.227 & \underline{0.264} & \textbf{0.219} & \textbf{0.258} & 0.251 & 0.276 & 0.255 & 0.280 & \underline{0.224} & \underline{0.264} & 0.258 & 0.278 & 0.272 & 0.291 & 0.259 & 0.281 & 0.259 & 0.315 & 0.271 & 0.320 & 0.259 & 0.287\\
\midrule\midrule
1\textsuperscript{st} Count & \textbf{4} & \textbf{3} & 3 & 3 &
0 & 0 & 0 & 0 & 2 & 2 & 0 & 0 & 0 & 0 & 0 & 0 & 0 & 0 & 0 & 0 & 0 & 0 \\
\bottomrule
\end{tabular}
}
\end{table*}


\section{Experiments}
\label{experiments}
To evaluate the validity and generality of the proposed method, we conducted experiments across four representative tasks: long-term forecasting, short-term forecasting, classification, and anomaly detection. The proposed method was extensively compared against state-of-the-art baselines, including Transformer-based approaches~\citep{liu2023itransformer, nie2022time, zhang2023crossformer} and Mamba-based method~\citep{patro2024simba, behrouz2024chimera}. Detailed experimental settings and additional results are provided in the Appendix~\ref{appendix:exp} and Github\footnote{https://github.com/ku-milab/Variable-Invariant-2D-SSM}.

\subsection{Long-term Forecasting}
\vspace{5pt}\noindent\textbf{Settings.}
We conducted experiments on eight benchmark datasets commonly used in time-series data forecasting tasks, including the Weather, Traffic, Electricity, Exchange, and 4 ETT datasets~\citep{zhou2021informer}. Baseline results were obtained from the corresponding literature to ensure consistency of comparison, and all models were evaluated using mean squared error (MSE) and mean absolute error (MAE). For brevity, we report the average results in the main text, while detailed results and extended baseline comparisons are provided in Table~\ref{appendix:long_term}.

\vspace{5pt}\noindent\textbf{Results.}
Table~\ref{exp:long_term_forecasting} highlights the effectiveness of the proposed method. It achieves the lowest MSE on four of the eight datasets and the lowest MAE on three, yielding the best overall performance among all baselines. Compared to Transformer-based methods, our approach consistently outperforms across all datasets, and it also surpasses 1D SSM variants~\citep{patro2024simba, matimepro}. Against Chimera~\citep{behrouz2024chimera}, a 2D SSM model, the proposed method achieves superior results on most datasets and remains competitive on the others, further demonstrating its robustness and effectiveness.

\begin{table*}[ht]
\centering
\caption{Experimental results of short-term time-series forecasting in the M4 dataset. Best results are highlighted in \textbf{bold}, and second-best results are \underline{underlined}.}
\label{exp:short_term_forecasting}
\renewcommand{\arraystretch}{1.2}
\resizebox{\textwidth}{!}{
\begin{tabular}{c|ccccccccccc}
\toprule
 Weighted Avg. & Ours & Chimera & ModernTCN & PatchTST & TimesNet & N-HiTS & N-BEATS & ETSformer & LightTS & DLinear & FEDformer \\
\midrule 
 SMAPE & \underline{11.686} & \textbf{11.618} & 11.698 & 11.807 & 11.829 & 11.927 & 11.851 & 14.718 & 13.525 & 13.639 & 12.840 \\
MASE & \underline{1.549} & \textbf{1.528} &  1.556 &  1.590 &  1.585 &  1.613 &  1.599 &  2.408 &  2.111 &  2.095 &  1.701 \\
OWA & \underline{0.832} &  \textbf{0.827} &  0.838 &  0.851 &  0.851 &  0.861 &  0.855 &  1.172 &  1.051 &  1.051 &  0.918 \\
\bottomrule
\end{tabular}
}
\end{table*}

\subsection{Short-term Forecasting}
\vspace{5pt}\noindent\textbf{Settings.}
Short-term forecasting was evaluated on the widely adopted M4 dataset~\citep{makridakis2018m4}. Performance was assessed using three standard metrics—Symmetric Mean Absolute Percentage Error (SMAPE), Mean Absolute Scaled Error (MASE), and Overall Weighted Average (OWA)—with baseline results drawn directly from the original literature to ensure comparability. For clarity, we report only the weighted average results in the main text, while the complete set of results is deferred to Table~\ref{appendix:short_term}.

\vspace{5pt}\noindent\textbf{Results.}
Table~\ref{exp:short_term_forecasting} reports the results on the M4 dataset. While the proposed method did not obtain the best score, it achieved the second-best performance, indicating its ability to effectively capture short-term patterns. Its slightly lower performance relative to Chimera is attributable to the dataset’s single-channel setting, where the advantages of variable-invariant modeling are less pronounced compared to Chimera’s inherent 2D formulation. Nonetheless, our method surpasses other state-of-the-art baselines, confirming its competitiveness in short-term forecasting.






\begin{figure}[t]
    \includegraphics[width=\linewidth]{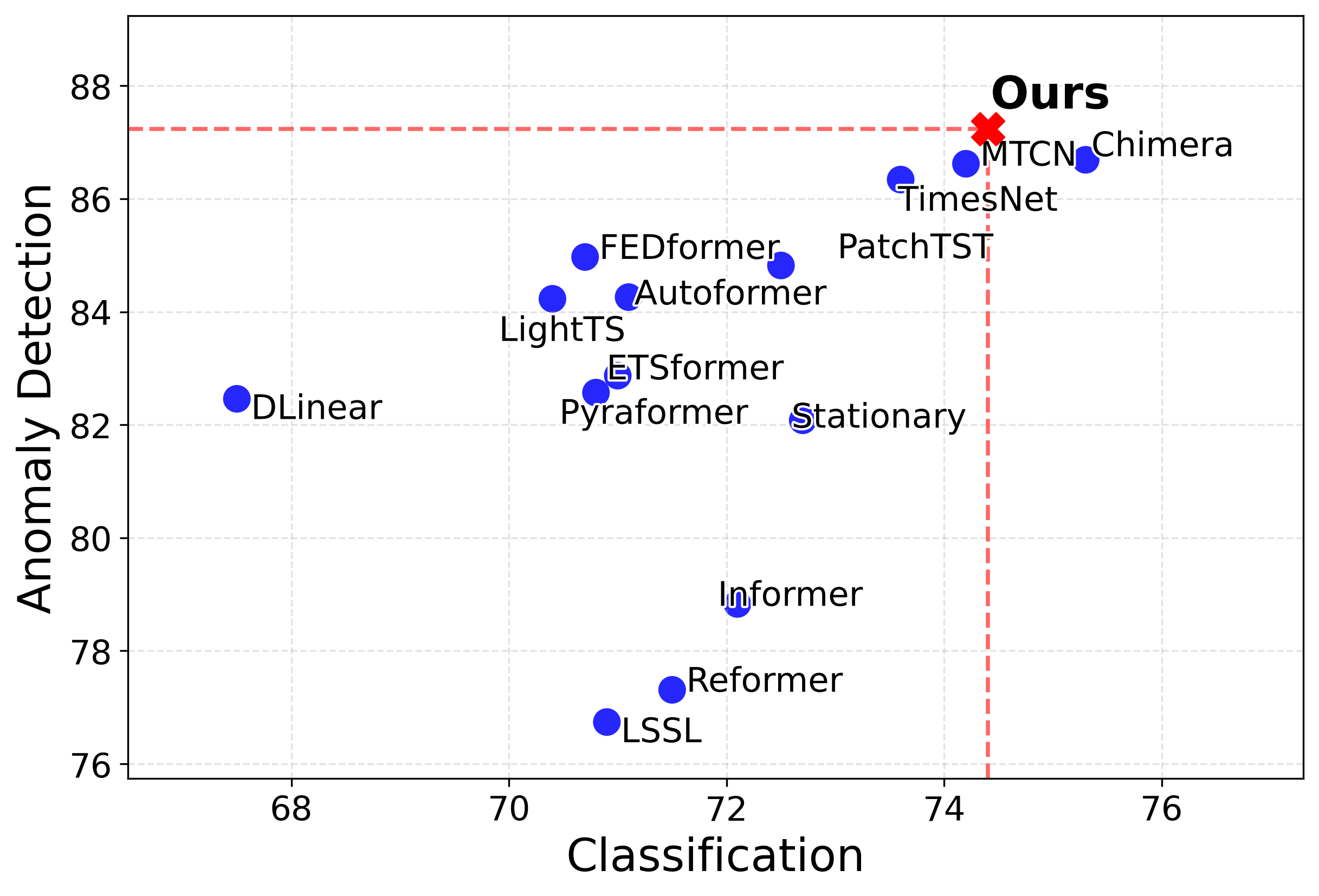}
    \caption{Results of classification (accuracy) and anomaly detection (F1 score).}
    \label{fig:cls_ad}
\end{figure}

\subsection{Classification \& Anomaly Detection}

\vspace{5pt}\noindent\textbf{Settings.}
For classification and anomaly detection, we benchmarked the proposed method on ten datasets from the UEA archive~\citep{bagnall2018uea} and five widely adopted anomaly detection datasets: SMD~\citep{su2019robust}, SWaT~\citep{mathur2016swat}, PSM~\citep{abdulaal2021practical}, MSL, and SMAP~\citep{hundman2018detecting}. Classification performance was assessed using overall accuracy, while anomaly detection was evaluated with precision, recall, and F1 score. Figure~\ref{fig:cls_ad} presents the aggregated results, reporting average accuracy (for classification) and average F1 score (for anomaly detection). Comprehensive results with detailed per-dataset comparisons are provided in the Table~\ref{appendix:classification} and~\ref{appendix:anomaly_detection}.


\vspace{5pt}\noindent\textbf{Results.}
Figure~\ref{fig:cls_ad} summarizes the overall results. Our proposed method achieves the best performance in anomaly detection, and while it falls slightly short of Chimera in classification, it remains superior to other baselines. This difference can be attributed to the nature of the tasks: anomaly detection strongly benefits from permutation-invariant modeling, as rare deviations often manifest across variable interactions without being tied to a fixed ordering. In contrast, classification tasks in the UEA datasets often involve structured temporal features with relatively limited variable dimensionality, where Chimera’s explicit sequential modeling of the variable axis may still provide an advantage. Although classification accuracy is marginally lower than Chimera, our method delivers competitive results at substantially lower computational cost, making it an attractive and efficient alternative—particularly given its clear gains in anomaly detection.

\begin{figure}[t]
    \includegraphics[width=1\linewidth]{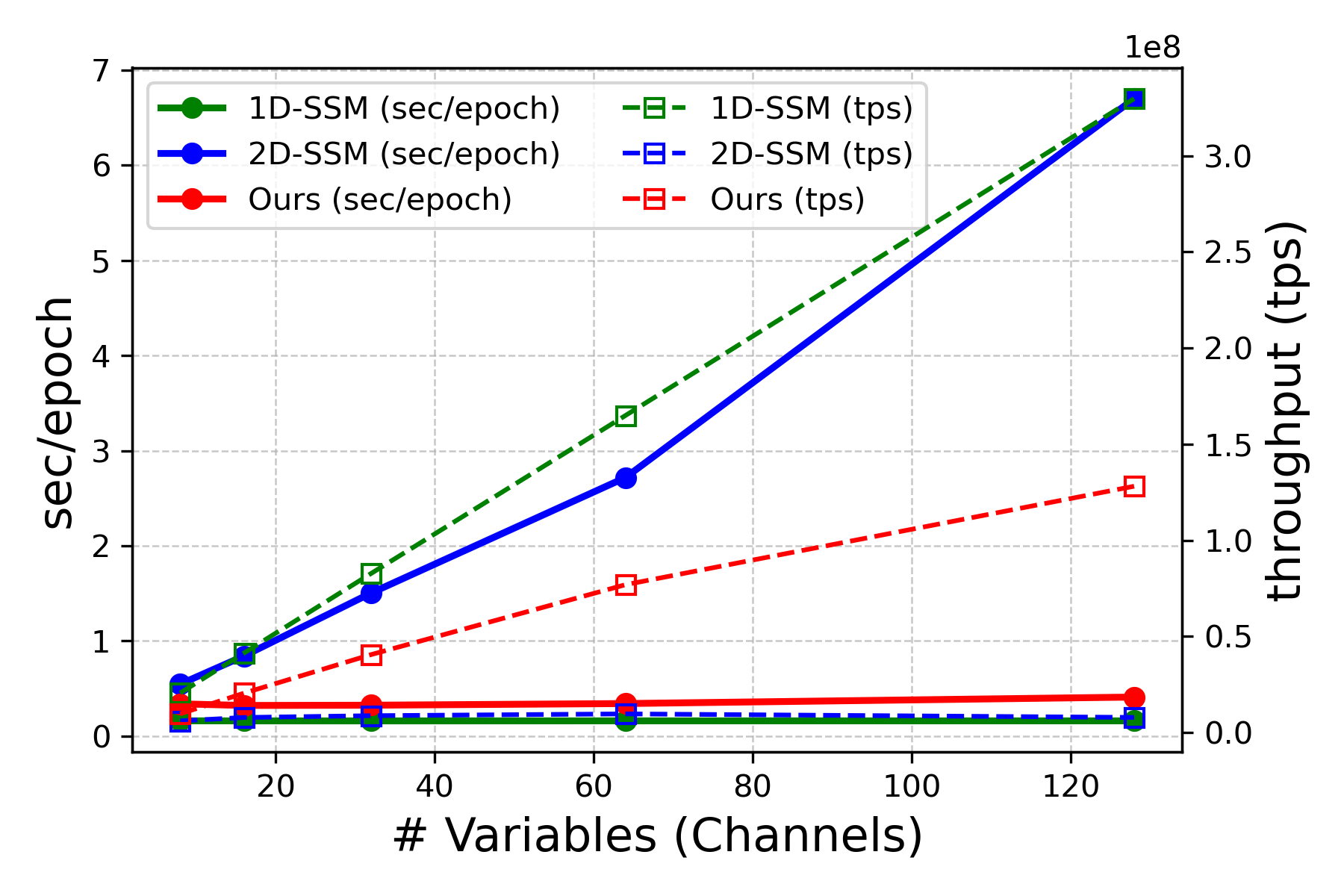}
    \caption{Efficiency analysis with respect to the number of variables. The left $y$-axis reports the training time in sec/epoch, while the right $y$-axis reports throughput in transactions per second.}
    \label{fig:efficiency}
\end{figure}

\section{Analysis}
To evaluate the superiority and efficiency of the proposed method, we conducted three complementary analyses. Specifically, we analyzed (\romannumeral 1) how efficient the proposed 2D SSM is compared to existing 2D SSMs, (\romannumeral 2) how well our proposed method performs in case studies where inter-variable relationships are important, and (\romannumeral 3) how each module of the proposed Mamba contributes to the overall performance.

\subsection{Efficiency of the Proposed 2D SSM}
To assess the computational efficiency of our formulation, we compared it against a 1D SSM and the 2D SSM~\citep{behrouz2024chimera}. For a fair comparison, all models shared identical configurations, differing only in the SSM component. We measured training efficiency using sec/epoch (lower is better) and throughput (higher is better), while varying the number of variables at a fixed sequence length of 256.

As shown in Figure~\ref{fig:efficiency}, our method achieves efficiency close to that of the 1D SSM, with only a slight overhead attributable to global variable aggregation. By contrast, conventional 2D SSM exhibits a sharp decline in efficiency as the number of variables increases, highlighting the scalability bottleneck introduced by sequential scans. These results confirm that the proposed formulation retains the efficiency properties discussed in Section~\ref{2dssm} while eliminating the inefficiencies of conventional 2D SSMs.

To clarify computational efficiency, we provide a quantitative comparison of FLOPs and peak GPU memory across representative transformer- and SSM-based baselines using their official implementations. Our model requires only a single temporal SSM scan plus a parallel pooling step, yielding substantially lower FLOPs than methods that recursively evolve states along both axes. A full derivation and empirical results are reported in the Section~\ref{analysis:eff}.

\begin{table}[th]
\centering
\caption{Controlled simulation for variable-ordering robustness.}
\label{exp:simulation}
\begin{tabular}{l|c|c|c}
\toprule
Setting & sec/epoch & MAE (std) & MAPE (std) \\
\midrule
Ours & \multirow{2}{*}{6.1s} & 0.093 (0.000) & 2.007 (0.000)\\
$+$ \textit{Permutation} & & 0.093 (0.000) & 2.063 (0.050) \\
\midrule
2D-SSM & \multirow{2}{*}{23.4s} & 0.093 (0.000) & 1.954 (0.000)\\
$+$ \textit{Permutation} & & 0.094 (0.001) & 2.341 (0.130) \\
\bottomrule
\end{tabular}
\end{table}

\begin{table}[th]
\centering
\caption{Controlled simulation for $C$-scaling.}
\label{exp:simulation_C}
\begin{tabular}{c|c|c|c|c}
\toprule
Method & \# C & sec/epoch & MAE & MAPE \\
\midrule
\multirow{5}{*}{Ours} & 16 & 6.2s & 0.106 & 2.465 \\
& 32 & 6.1s & 0.098 & 2.146 \\
& 64 & 6.1s & 0.093 & 2.007 \\
& 128 & 6.4s & 0.096 & 2.313 \\
& 256 & 6.3s & 0.097 & 2.663 \\
\midrule
\multirow{5}{*}{2D-SSM} & 16 & 13.4s & 0.105 & 2.603 \\
& 32 & 19.4s & 0.098 & 2.101 \\
& 64 & 23.4s & 0.093 & 1.954 \\
& 128 & 51.0s & 0.096 & 2.489 \\
& 256 & 88.8s & 0.097 & 2.779 \\
\bottomrule
\end{tabular}
\end{table}

\subsection{Controlled Simulation}
To evaluate accuracy in settings where inter-variable relationships are critical, we constructed a controlled simulation using a VAR(1) process defined on a Watts–Strogatz small-world graph~\citep{watts1998strogatz}. The dataset comprised 64 variables over 1000 time steps, and forecasting performance was assessed using MAE and MAPE (Table~\ref{exp:simulation}). To further investigate scalability with respect to the number of variables, we performed a controlled $C$-scaling experiment by varying $C \in \{16, 32, 64, 128, 256\}$ and measuring both performance and computational efficiency (Table~\ref{exp:simulation_C}).

Under the baseline setting (no permutation), the proposed method and a conventional 2D-SSM achieved comparable MAE/MAPE, indicating that both formulations can capture the underlying dynamics when a favorable variable ordering is provided. Nevertheless, our method was approximately 3.8$\times$ faster per epoch, reflecting the efficiency gain from removing the variable-axis scan.

To assess robustness to variable ordering, we repeated training and evaluation over 10 random permutations of the variable indices. The proposed method exhibited negligible performance drift across permutations (low variance), consistent with its permutation-invariant design via the global summary $\psi$. In contrast, the conventional 2D-SSM showed significantly larger performance variance, revealing sensitivity to the imposed ordering. This suggests that while 2D-SSM can achieve competitive accuracy under favorable indexings, its performance degrades and variability increases under re-orderings.

Beyond permutation robustness, the proposed formulation also exhibits favorable scaling behavior as the number of variables increases. While conventional 2D-SSMs require a recursive pass along the variable axis, our model replaces this sequence of updates with a single parallel pooling step, whose cost scales linearly in theory but executes as a fully batched GPU operation in practice. Both methods achieved comparable MAE/MAPE at all dimensionalities, confirming that eliminating variable-axis recurrence does not compromise expressiveness. However, while the runtime of standard 2D-SSM grew almost linearly with $C$, our method maintained nearly constant training time, demonstrating that the benefit of variance-invariant pooling intensifies as dimensionality grows. This highlights that the proposed formulation is particularly advantageous in high-dimensional multivariate systems where cross-variable interactions coexist with scalability demands.

\begin{table}[th]
\centering
\caption{Ablation study on ETTh1 and ETTm1. \textbf{Case \MakeUppercase{\romannumeral 1}}: branch removal; \textbf{Case \MakeUppercase{\romannumeral 2}}: different $\Delta_f$ ranges in the frequency domain.}
\label{exp:ablation_study}
\begin{tabular}{c|l|cc|cc}
\toprule
 \multicolumn{2}{c|}{\multirow{2}{*}{Setting}} & \multicolumn{2}{c|}{ETTh1} & \multicolumn{2}{c}{ETTm1}\\
\cmidrule{3-6}
\multicolumn{2}{c|}{} & MSE & MAE & MSE & MAE\\
\midrule
\multirow{3}{*}{\MakeUppercase{\romannumeral 1}} & w/o short & 0.433 & 0.440 & 0.349 & 0.381 \\
& w/o long & 0.431 & 0.439 & 0.350 & 0.381 \\
& w/o freq. & 0.425 & 0.437 & 0.348 & 0.381 \\
\midrule
\MakeUppercase{\romannumeral 2} & $[0.1,0.5]$ & 0.406 & 0.426 & 0.349 & 0.380 \\
($\Delta_f$) & $[0.01,0.05]$ & 0.404 & 0.423 & 0.349 & 0.380 \\
\midrule 
\MakeUppercase{\romannumeral 3} & Mean & 0.397 & 0.419 & 0.345 & 0.378 \\
($\psi$) & Attention & 0.409 & 0.426 & 0.347 & 0.381 \\
\midrule\midrule
\multicolumn{2}{c|}{Ours} & 0.397 & 0.419 & 0.345 & 0.378 \\
\bottomrule
\end{tabular}
\end{table}

\subsection{Ablation Study}
In Table~\ref{exp:ablation_study}, we systematically evaluate the contribution of each component of the proposed Mamba model through three ablation cases: (Case \MakeUppercase{\romannumeral 1}) analysis of individual branches, (Case \MakeUppercase{\romannumeral 2}) assessment of frequency-domain resolution under varying $\Delta_f$ settings, and (Case \MakeUppercase{\romannumeral 3}) comparison of permutation-invariant aggregation functions $\psi$.

In Case \MakeUppercase{\romannumeral 1}, removing any branch led to performance degradation, confirming that all modules contribute meaningfully. Among them, the short-term and long-term pathways have the most significant impact, while the frequency branch plays a complementary role. Case \MakeUppercase{\romannumeral 2} further highlights this complementarity: although the best results are obtained when all branches are combined, performance gradually improves as $\Delta_f$ decreases, indicating that finer resolution in the high-frequency region enhances modeling capacity. In Case \MakeUppercase{\romannumeral 3}, mean pooling consistently outperforms attention-based pooling across both datasets. This supports our design choice: forecasting tasks benefit more from stable, low-variance, parameter-free aggregation, with minimal computational overhead. This ablation confirms that the default choice of mean pooling for forecasting tasks is both empirically justified and computationally optimal. A detailed discussion of task-dependent aggregation choices and their computational properties is provided in the Appendix~\ref{appendix:aggregation}.

\begin{table}[th]
\centering
\caption{Measured peak GPU memory and forward FLOPs.}
\label{analysis:flops}
\begin{tabular}{l|c|c}
\toprule
Method & Memory (MB) & FLOPs (G)\\
\midrule
Ours & 546.62 & 11.99 \\ 
TimePro & 164.35 & 35.04 \\ 
S-Mamba & 270.76 & 96.13 \\ 
PatchTST & 675.26 & 85.64 \\ 
iTransformer & 1155.14 & 73.20 \\ 
\bottomrule
\end{tabular}
\end{table}

\subsection{Computational Efficiency Analysis}
\label{analysis:eff}
We provide a quantitative comparison of computational cost across representative transformer- and SSM-based multivariate forecasting models in Table~\ref{analysis:flops}. All methods were evaluated under the same conditions: ECL dataset, input window length $L = 96$, prediction window $H = 720$, and batch size 16. We report measurements only for models with official code enabling reproducible FLOPs and peak memory profiling.

A conventional 2D State Space Model processes dynamics along both temporal and variable axes. Let $L$ be the sequence length, $C$ the number of variables, and $d$ the hidden dimension. The standard cost is:
\begin{align}
    \underbrace{\mathcal{O}(LCd)}_{\text{temporal SSM scan}} + \underbrace{\mathcal{O}(LCd)}_{\text{variable-axis recursion}} = \mathcal{O}(2LCd).
\end{align}
The second term is sequential in $C$, restricting GPU parallelism.

We replace the recursive variable update with a single permutation-invariant pooled descriptor:
\begin{align}
    \underbrace{\mathcal{O}(LCd)}_{\text{temporal SSM scan}} + \underbrace{\mathcal{O}(Cd)}_{\text{Global pooling}} = \mathcal{O}(LCd) + \mathcal{O}(Cd).
\end{align}
Crucially, the additional $\mathcal{O}(Cd)$ term is computed in one fully parallel GPU kernel, rather than a sequential pass. This reduces practical computation to a single directional SSM scan, eliminating any dependence on variable ordering or recursive updates. Because our model does not evolve states along the variable axis, it avoids the second $\mathcal{O}(LCd)$ recurrence found in SSM baselines (TimePro, S-Mamba). The measured FLOPs are therefore markedly lower, despite additional frequency-domain modeling.

\section{Conclusion}


In this work, we introduced a variable-invariant two-dimensional state space model for multivariate time series. By replacing ordered recurrence along the variable axis with a global permutation-invariant aggregation, the proposed VI 2D SSM eliminates artificial ordering dependence while preserving global inter-variable coupling and enabling full parallelism.

Beyond architectural design, we provided a theoretical analysis of symmetry-constrained inter-variable dynamics. We showed that permutation equivariance uniquely determines the canonical form of admissible linear coupling and that the proposed formulation provides a parameterization consistent with this class through local self-dynamics and global pooled interaction. This symmetry-induced structure reduces computational dependency depth and simplifies stability analysis via a two-mode invariant decomposition.


Extensive experiments on forecasting, classification, and anomaly detection benchmarks demonstrate consistent improvements over state-of-the-art baselines in both accuracy and efficiency. These results highlight the proposed framework as a scalable, symmetry-grounded, and theoretically justified approach for modeling complex temporal–spectral dynamics in multivariate time series.

\section*{Acknowledgments}
This work was supported by Institute of Information \& communications Technology Planning \& Evaluation (IITP) grant funded by the Korea government(MSIT) (No. RS-2019-II190079, Artificial Intelligence Graduate School Program(Korea University), No. RS-2024-00457882, National AI Research Lab Project, and No. RS-2022-II220959 ((Part 2) Few-Shot Learning of Causal Inference in Vision and Language for Decision Making)).

\ifCLASSOPTIONcaptionsoff
  \newpage
\fi

\bibliographystyle{IEEEtran}
\bibliography{main}

\begin{IEEEbiography}[{\includegraphics[width=1in,height=1.25in,clip,keepaspectratio]{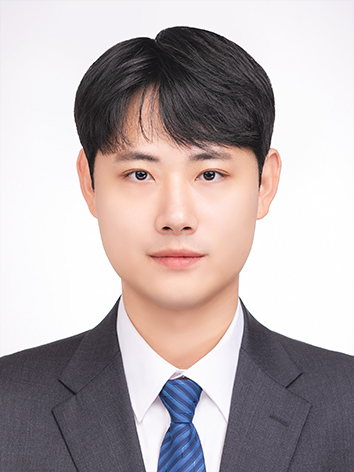}}]{Seungwoo Jeong}
	received the B.S. degree in Mathematics and Statistics from Hankuk University of Foreign Studies, Yongin, South Korea, in 2019. He is currently pursuing the Ph.D. degree with the Department of Artificial Intelligence, Korea University, Seoul, South Korea.
    
    His current research interests include time-series analysis, brain-computer interface, and geometric learning. 
\end{IEEEbiography}
\begin{IEEEbiography}[{\includegraphics[width=1in,height=1.25in,clip,keepaspectratio]{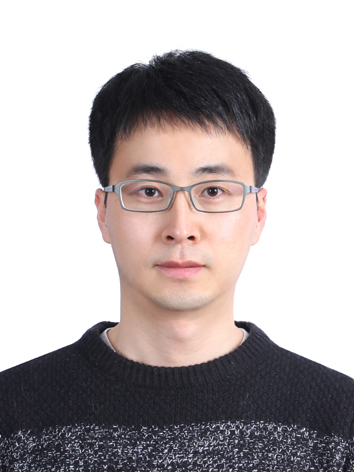}}]{Heung-Il Suk}
     is currently a Professor at the Department of Artificial Intelligence at Korea University. He was a Visiting Professor at the Department of Radiology at Duke University between 2022 and 2023.

     He was awarded a Kakao Faculty Fellowship from Kakao and a Young Researcher Award from the Korean Society for Human Brain Mapping (KHBM) in 2018 and 2019, respectively. His research interests include causal machine/deep learning, explainable AI, biomedical data analysis, and brain-computer interface.

     Dr. Suk serves as an Editorial Board Member for Clinical and Molecular Hepatology (Artificial Intelligence Sector), Electronics, Frontiers in Neuroscience, Frontiers in Radiology (Artificial Intelligence in Radiology), International Journal of Imaging Systems and Technology (IJIST), and a Program Committee or a Reviewer for NeurIPS, ICML, ICLR, AAAI, IJCAI, CVPR, MICCAI, AISTATS, \etc.
\end{IEEEbiography}

\clearpage

\appendices
\section{Variable-Invariant 2D SSM}
\label{appendix:discrete}

\subsection{Discretization of the 2D SSM}
The continuous proposed 2D SSM is given by:
\begin{equation*}
\begin{aligned}
\frac{d}{dt}
\begin{bmatrix}
h_h(t,c) \\
h_v(t,c)
\end{bmatrix}
&=
\begin{bmatrix}
A_h & 0 \\
A_{vh} & A_v
\end{bmatrix}
\begin{bmatrix}
h_h(t,c) \\
h_v(t,c)
\end{bmatrix}  \\
&\quad+
\begin{bmatrix}
A_{h\psi} \\
A_{v\psi}
\end{bmatrix}
\psi(t)
+
\begin{bmatrix}
B_h \\
B_v
\end{bmatrix}
x(t,c).
\end{aligned}
\end{equation*}
In matrix form:
\begin{align*}
\mathcal{A} = \begin{bmatrix} A_{h} & 0 \\ A_{vh} & A_{v} \end{bmatrix}, \quad \mathcal{B} = \begin{bmatrix} A_{h\psi} & B_h \\ A_{v\psi} & B_v \end{bmatrix}.
\end{align*}
The general solution of the continuous state equation is:
\begin{align*}
\mathbf{h}(t) = e^{\mathcal{A}(t-t_0)} \mathbf{h}(t_0) + \int_{t_0}^{t} e^{\mathcal{A}(t-\tau)} \mathcal{B} \mathbf{u}(\tau) d\tau,
\end{align*}
where $\mathbf{h}(t) = \begin{bmatrix} h_h(t, c) \\ h_v(t, c) \end{bmatrix}$ and $\mathbf{u}(t) = \begin{bmatrix} \psi(t) \\ x(t,c) \end{bmatrix}$.

Under the ZOH assumption, when $t_0 = k\Delta$ and $t = (k+1)\Delta$:
\begin{equation*}
\begin{aligned}
\mathbf{h}((k+1)\Delta)
&= e^{\mathcal{A}\Delta} \mathbf{h}(k\Delta) \\
&\quad + \int_{k\Delta}^{(k+1)\Delta}
e^{\mathcal{A}((k+1)\Delta-\tau)}
\mathcal{B}\,\mathbf{u}[k] \, d\tau.
\end{aligned}
\end{equation*}
Since $\mathbf{u}(\tau) = \mathbf{u}[k]$ is constant over the interval due to ZOH:
\begin{align*}
\mathbf{h}[k+1] &= e^{\mathcal{A}\Delta} \mathbf{h}[k] + \Big(\int_{k\Delta}^{(k+1)\Delta} e^{\mathcal{A}((k+1)\Delta-\tau)} \mathcal{B} d\tau\Big)\mathbf{u}[k]
\\ &= e^{\mathcal{A}\Delta} \mathbf{h}[k] + \Big(\int_{0}^{\Delta} e^{\mathcal{A}(\Delta-\lambda)} \mathcal{B} d\lambda\Big)\mathbf{u}[k]
\\ &= e^{\mathcal{A}\Delta} \mathbf{h}[k] + \Big(\int_{0}^{\Delta} e^{\mathcal{A}\nu} \mathcal{B} d\nu\Big) \mathbf{u}[k].
\end{align*}
Therefore, the discretized system is:
\begin{align*}
\mathbf{h}[k+1,c] &= \bar{\mathcal{A}} \mathbf{h}[k,c] + \bar{\mathcal{B}} \begin{bmatrix} \psi[k] \\ x[k,c] \end{bmatrix},
\end{align*}
where:
\begin{align*}
\bar{\mathcal{A}} &= e^{\mathcal{A}\Delta} = \exp\left(\begin{bmatrix} A_{h} & 0 \\ A_{vh} & A_{v} \end{bmatrix} \Delta\right), \\
\bar{\mathcal{B}} &= \Big(\int_{0}^{\Delta} e^{\mathcal{A}\tau} d\tau\Big) \mathcal{B}.
\end{align*}

\subsection{Convolution Formulation of Variable-Invariant 2D SSM}
Applying the recurrent rules in Eq. 11-13, we can write the output as: 
\begin{align*}
y[t, c] = \sum_{\tau}^{t-1}K_{\psi}[t-1-\tau]\psi[\tau] + \sum_{\tau}^{t-1}K_x[t-1-\tau]x[\tau, c],
\end{align*}
where global info kernel $K_\psi[k] = [C_h, C_v]\bar{\mathcal{A}}_k\bar{\mathcal{B}}[:, 0]$ and input kernel $K_x[k] = [C_h, C_v]\bar{\mathcal{A}}_k\bar{\mathcal{B}}[:, 1]$.

\section{Theoretical proof}
\subsection{Proposition 1}
\label{appendix:proof_p1}
\textit{
Let $\pi$ be any permutation of $\{1,\ldots, C\}$ and define 
$\psi_\pi(t) := \phi\!\left(\{W_v z(t,\pi(c))\}_{c=1}^C\right)$.
Assume (i) $W_v$ is variable-shared (independent of $c$), and (ii) $\phi$ is permutation-invariant on multisets.
Then $\psi_\pi(t) = \psi(t)$ for all $t$.
}
\begin{proof}
Define the variable-wise multiset
\begin{align*}
\mathcal{S}_t = \{W_v z(t,c)\}_{c=1}^C.
\end{align*}
By definition, $\psi(t) = \phi(\mathcal{S}_t)$.

Under permutation $\pi \in S_C$, since $z$ is permutation-equivariant
along the variable axis, we have
\begin{align*}
z^\pi(t,c) = z(t,\pi(c)).
\end{align*}
Hence, the permuted collection becomes
\begin{align*}
\mathcal{S}_t^\pi
= \{W_v z^\pi(t,c)\}_{c=1}^C
= \{W_v z(t,\pi(c))\}_{c=1}^C.
\end{align*}

Because $\pi$ is a bijection on $\{1,\ldots,C\}$,
$\mathcal{S}_t^\pi$ coincides with $\mathcal{S}_t$
as a multiset. Since $\phi$ is permutation-invariant,
\begin{align*}
\phi(\mathcal{S}_t^\pi) = \phi(\mathcal{S}_t).
\end{align*}
Therefore, $\psi_\pi(t) = \psi(t)$ for all $t$.
\end{proof}

\subsection{Proposition 2}
\label{appendix:proof_p2}
\textit{
Assume $A_h, A_v, A_{h\psi}, A_{v\psi}, A_{vh}, B_h, B_v$ are variable-shared (independent of $c$), and let $\psi(t)$ be defined as above with a permutation-invariant $\phi$ (Prop. 1). For any permutation $\pi$ of $\{1, \ldots, C\}$, consider the permuted inputs $x^\pi(t, c) :=x(t, \pi(c))$ and permuted initial states $h^\pi(0, c):=h(0, \pi(c))$. Then the unique solutions to Eq. 8-9 satisfy $h^\pi_h(t, c) = h_h(t, \pi(c))$, $h^\pi_v(t, c)=h_v(t, \pi(c))$ $\forall\; t, c$, i.e., the dynamics are permutation equivariant with respect to the variable axis.
}
\begin{proof}
    Let $h_h, h_v$ be a solution of the continuous dynamics
    \begin{equation*}
    \begin{split}
    \frac{\partial h_h(t, c)}{\partial t}
    &= A_h h_h(t, c) + A_{h\psi}\psi(t) \\
    &\quad + B_h x(t, c),
    \end{split}
    \end{equation*} 
    \begin{equation*}
    \begin{split}
    \frac{\partial h_v(t, c)}{\partial t}
    &= A_v h_v(t, c) + A_{v\psi}\psi(t) \\
    &\quad + A_{vh} h_h(t, c) + B_v x(t,c).
    \end{split}
    \end{equation*}
    with initial conditions $h_h(0, c), h_v(0, c)$. Here, the matrices $A_h, A_v, A_{h\psi}, A_{v\psi}, B_{vh}, B_h, B_v$ are \emph{variable-shared} (do not depend on $c$), and $\psi(t) = \phi(\{W_vh_v(t, c)\}^C_{c=1})$ with $\phi$ permutation-invariant (Prop. 1). Fix any permutation $\pi$ of $\{1,\ldots, C\}$ and let the permuted inputs $x^\pi(t, c):=x(t, \pi(c))$, and the permuted initial states $h^\pi_h(0, c):= h_h(0, \pi(c)), h_v(0, c):=h_v(0, \pi(c))$. Consider the $\tilde{h}_h(t, c):= h_h(t, \pi(c)), \tilde{h}_v(t, c) := h_v(t, \pi(c))$. We show that $(\tilde{h}_h, \tilde{h}_v)$ satisfy the permuted system driven by $x^\pi$ with initial conditions $h^\pi_h(0, \cdot), h_v^\pi(0, \cdot)$.
    
    \vspace{5pt}\noindent\textbf{$\psi$ is invariant under variable permutations.}
    
    By Prop. 1, for any $t$, $\psi^\pi(t):=\psi(t)$.
    Hence, the pooled descriptor used in the dynamics is unchanged by the permutation.

    \vspace{5pt}\noindent\textbf{$\tilde{h}_h, \tilde{h}_v$ satisfy the permuted ODEs.}

    Differentiating $\tilde{h}_h$ and using Eq.~8,
    \begin{align*}
        \frac{\partial}{\partial t}\tilde{h}_h(t, c) &= \frac{\partial}{\partial t}h_h(t, \pi(c))
        \\&= A_h h_h(t, \pi(c)) + A_{h\psi}\psi(t) + B_h x(t, \pi(c))
        \\&= A_h \tilde{h}_h(t, c) + A_{h\psi}\psi(t) + B_h x^\pi(t, c).
    \end{align*}
    Likewise, using Eq.~9,
    \begin{equation*}
    \begin{aligned}
    \frac{\partial}{\partial t}\tilde{h}_v(t, c)
    &= A_v \tilde{h}_v(t, c)
    + A_{v\psi}\psi(t) \\
    &\quad + A_{vh}\tilde{h}_h(t, c)
    + B_v x^\pi(t,c).
    \end{aligned}
    \end{equation*}
    By construction, $\tilde{h}_h(0, c) = h_h^\pi(0, c)$ and $\tilde{h_v}(0, c) = h^\pi_v(0, c)$. Thus, $(\tilde{h}_h, \tilde{h}_v)$ are the solution of the permuted system with input $x^\pi$ and the permuted initial states.
    
    \vspace{5pt}\noindent\textbf{Uniqueness}

    The systems are linear time-varying (through $\psi$) but globally Lipschitz in the states. Hence, the solutions are unique. Therefore, the unique solution $(h^\pi_h, h^\pi_v)$ of the permuted system coincides with $(\tilde{h}_h, \tilde{h}_v)$, \ie,
    \begin{align*}
        (h^\pi_h(t, c), h^\pi_v(t, c)) &=(\tilde{h}_h(t, c), \tilde{h}_v(t, c)) \\&= (h_h(t, \pi(c)), h_v(t, \pi(c))),\quad\forall t, c.
    \end{align*}
    
    This proves permutation \emph{equivariance} of the dynamics with respect to the variable axis.
\end{proof}

\subsection{Theorem 1}
\label{appendix:thm1}
\textit{
Let $M \in \mathbb{R}^{C\times C}$. Then, 
\begin{align*}
    MP_\pi = P_\pi M \quad \forall \pi \in S_C 
\end{align*}
if and only if 
\begin{align*}
    M = \alpha I_C + \beta \mathbf{11}^\top
\end{align*}
for some scalars $\alpha, \beta \in \mathbb{R}$.}

\textit{
\noindent In particular, any linear permutation-equivariant state coupling must take the form
\begin{align*}
    h_v(t+1, c) = \alpha h_v(t, c) + \beta \sum^C_{j=1} h_v(t, j) + N x (t, c).
\end{align*}
    }
\begin{proof}

\textbf{Sufficiency}

Let $M = \alpha I_C + \beta \mathbf{11}^\top$. Since permutation matrices satisfy $P_\pi I_C = I_CP_\pi$, and $P_\pi\mathbf{1} = \mathbf{1},$ $\mathbf{1}^\top P_\pi = \mathbf{1}^\top,$ it follows that 
    \begin{align*}
        P_\pi (\mathbf{11}^\top) = (P_\pi \mathbf{1})\mathbf{1}^\top = \mathbf{11}^\top = \mathbf{1}(\mathbf{1}^\top P_\pi) = (\mathbf{11}^\top)P_\pi.
    \end{align*}
Hence, both $I_c$ and $\mathbf{11}^\top$ commute with every permutation matrix, and therefore
\begin{align*}
    MP_\pi = P_\pi M\quad \forall \pi \in S_C.
\end{align*}

\vspace{5pt}\noindent\textbf{Necessity}

Assume $MP_\pi = P_\pi M$ $\forall \pi \in S_C$. Since the symmetric group is generated by transpositions, it suffices to consider permutation matrices corresponding to swaps of two indices $i \neq j$.

Let $P_{ij}$ denote the matrix that exchanges indices $i$ and $j$. Right-multiplication by $P_{ij}$ swaps columns $i$ and $j$, while left-multiplication swaps rows $i$ and $j$. Thus, the commutation relation $MP_{ij} = P_{ij}M$ implies that exchanging columns $i$ and $j$ yields the same matrix as exchanging rows $i$ and $j$.

From entrywise comparison, we obtain: 
\begin{enumerate}
    \item All diagonal entries are equal $M_{ii} = M_{jj}\quad \forall i,j$.
    \item All off-diagonal entries are equal $M_{ik} = M_{jk},\quad M_{ki} = M_{kj}\quad \forall i\neq j,\; k \notin \{i,j\}$.
\end{enumerate}
Hence there exist scalars $d$ and $o$ such that
\begin{align*}
    M_{ii} = d,\quad M_{ij} = o\quad (i\neq j).
\end{align*}
Therefore, 
\begin{align*}
    M &= o\mathbf{11}^\top + (d-o) I_C 
    \\&= \alpha I_C + \beta \mathbf{11}^\top
\end{align*}
where $\alpha = d-o$ and $\beta = o$.
\end{proof}

\subsection{Theorem 2}
\textit{
Consider a 2D SSM with $C$ variables.
\begin{enumerate}
    \item If inter-variable coupling is implemented via ordered recurrence (\eg, $h_v(t,c+1)$ depends on $h_v(t,c)$), then the update at each fixed time $t$ induces a dependency chain of length $C$ along the variable axis.
    \item In contrast, the proposed permutation-invariant aggregation
    $\psi(t)$ reduces the variable-axis dependency depth to $\mathcal{O}(1)$, since all variable-wise updates are conditionally independent given $\psi(t)$.
\end{enumerate}
}
\begin{proof}
We measure the variable-axis dependency depth as the length of the longest directed path in the computational graph at a fixed time step $t$.
\begin{enumerate}
    \item In an ordered recurrence, $h_v(t,c+1)$ explicitly depends on $h_v(t,c)$. Hence the computation graph forms a directed chain
\[
h_v(t,1) \rightarrow h_v(t,2) \rightarrow \cdots \rightarrow h_v(t,C),
\]
whose length is $C$.

\item In the proposed formulation, the global descriptor $\psi(t)$ is computed via a permutation-invariant aggregation over $\{z(t,c)\}_{c=1}^C$. This aggregation can be implemented using a parallel reduction, whose depth is independent of $C$ (\eg, $\mathcal{O}(1)$ under ideal parallelization or $\mathcal{O}(\log C)$ in tree reduction). Once $\psi(t)$ is computed, each $h_v(t+1,c)$ depends only on local quantities and $\psi(t)$, and thus all updates can be performed in parallel.
\end{enumerate}
Hence, the longest dependency path along the variable axis is constant (up to reduction depth), establishing $\mathcal{O}(1)$ variable-axis dependency depth.
\end{proof}

\subsection{Theorem 3}
\textit{If the continuous-time matrix $A_v$ is Hurwitz, i.e.,
\[
\Re(\lambda_i(A_v)) < 0 \quad \forall i,
\]
then for any $\Delta > 0$, the discretized matrix
\[
\bar A_v = e^{\Delta A_v}
\]
satisfies
\[
\rho(\bar A_v) < 1,
\]
where $\rho(\cdot)$ denotes the spectral radius.}

\begin{proof}
Let $\lambda_i$ be an eigenvalue of $A_v$. By the spectral mapping theorem, the eigenvalues of $\bar A_v = e^{\Delta A_v}$ are $e^{\Delta \lambda_i}$.
Since $\Re(\lambda_i) < 0$ for all $i$ and $\Delta>0$,
\[
|e^{\Delta \lambda_i}| = e^{\Delta \Re(\lambda_i)} < 1.
\]
Therefore, all eigenvalues of $\bar A_v$ lie strictly inside the unit circle, and hence $\rho(\bar A_v) < 1$.
\end{proof}

\section{Experiments Details}
\label{appendix:exp}
To assess the effectiveness of the proposed method, we conducted extensive comparisons against a broad range of state-of-the-art models, including recent Mamba-based approaches. Baseline results were obtained from the corresponding literature to ensure fairness and consistency. The comparison models include Chimera~\citep{behrouz2024chimera}, TimePro~\citep{matimepro}, Simba~\citep{patro2024simba}, TCN~\citep{luo2024moderntcn}, iTransformer~\citep{liu2023itransformer}, RLinear~\citep{li2023revisiting}, PatchTST~\citep{nie2022time}, Crossformer~\citep{zhang2023crossformer}, TiDE~\citep{das2023long}, TimesNet~\citep{wu2022timesnet}, DLinear~\citep{zeng2023transformers}, SCINet~\citep{liu2022scinet}, FEDformer~\citep{zhou2022fedformer}, Stationary~\citep{liu2022non}, N-HiTS~\citep{challu2023nhits}, N-BEATS~\citep{oreshkin2019n}, ETSformer~\citep{woo2022etsformer}, LightTS~\citep{zhang2207less}, Autoformer~\citep{wu2021autoformer}, Pyraformer~\citep{liupyraformer}, Informer~\citep{zhou2021informer}, Reformer~\citep{kitaev2020reformer}, LSTM~\citep{hochreiter1997long}, LSTNet~\citep{lai2018modeling}, LSSL~\citep{gu2021efficiently}, Flowformer~\citep{wu2022flowformer} and LogTrans~\citep{li2019enhancing}.

\begin{table*}[ht]
\centering
\caption{Data description for time series forecasting.}
\label{appendix:forecasting_data}
\begin{tabular}{l|c|c|c|c}
\toprule
\multirow{2}{*}{Dataset} & \multirow{2}{*}{Types} & Sample Number & Variable & \multirow{2}{*}{Prediction Length} \\
& & (\small train/validation/test) & Dimension & \\
\midrule
ETTh1, ETTh2 & \multirow{6}{*}{Long-term} & (8545/2881/2881) & 7 & \{96 192, 336, 720\} \\
ETTm1, ETTm2 &  & (34465/11521/11521) & 7 & \{96 192, 336, 720\} \\
Electricity & & (18317/2633/5261) & 321 & \{96 192, 336, 720\} \\
Traffic & & (12185/1757/3509) & 862 & \{96 192, 336, 720\} \\
Weather & & (36792/5271/10540) & 21 & \{96 192, 336, 720\} \\
Exchange & & (5120/665/1422) & 8 & \{96 192, 336, 720\} \\
\midrule
M4-Yearly & \multirow{6}{*}{Short-term} & (23000/0/23000) & 1 & 6 \\
M4-Quarterly & & (24000/0/24000) & 1 & 8 \\
M4-Monthly & & (48000/0/48000) & 1 & 18 \\
M4-Weekly & & (359/0/359) & 1 & 13 \\
M4-Daily & & (4227/0/4227) & 1 & 14 \\
M4-Hourly & & (414/0/414) & 1 & 48 \\
\bottomrule
\end{tabular}
\end{table*}

\subsection{Datasets}
\vspace{5pt}\noindent\textbf{Forecasting.}
We used eight benchmark datasets for long-term forecasting (Weather, Traffic, Electricity, Exchange, and four ETT datasets~\citep{zhou2021informer}) and the M4 dataset for short-term forecasting~\citep{makridakis2018m4}. Long-term datasets consist of single continuous sequences with samples generated by sliding windows, whereas M4 contains 100,000 heterogeneous series from diverse domains. Dataset descriptions are provided in Table~\ref{appendix:forecasting_data}.

\begin{table*}[ht]
\centering
\caption{Data description for time series classification and anomaly detection.}
\label{appendix:cls_ad_data}
\resizebox{\textwidth}{!}{
\begin{tabular}{l|c|c|c|c}
\toprule
\multirow{2}{*}{Dataset} & \multirow{2}{*}{Types} & Sample Number & Variable & Series Length (\small classification) \\ 
 & & (\small train/validation/test) & Dimension & Sliding Window Length (\small anomaly detection) \\ 
\midrule
EthanolConcentration & \multirow{10}{*}{Classification} & (261/-/263) & 3 & 1751 \\
FaceDetection & & (5890/-/3524) & 144 & 62 \\
Handwriting & & (150/-/850) & 3 & 152 \\
Heartbeat & & (204/-/205) & 61 & 405 \\
JapaneseVowels & & (270/-/370) & 12 & 29 \\
PEMS-SF & & (267/-/173) & 963 & 144 \\
SelfRegulationSCP1 & & (268/-/293) & 6 & 896 \\
SelfRegulationSCP2 & & (200/-/180) & 7 & 1152 \\
SpokenArabicDigits & & (6599/-/2199) & 13 & 93 \\ 
UWaveGestureLibrary & & (120/-/320) & 3 & 315 \\
\midrule
SMD & \multirow{4}{*}{Anomaly} & (566724/141681/708420) & 38 & 100 \\
MSL & \multirow{4}{*}{Detection} & (44653/11664/73729) & 55 & 100 \\
SMAP & & (108146/27037/427617) & 25 & 100 \\
SWaT & & (396000/99000/449919) & 51 & 100 \\ 
PSM & & (105984/26497/87841) & 25 & 100 \\
\bottomrule
\end{tabular}}
\end{table*}

\vspace{5pt}\noindent\textbf{Classification \& Anomaly Detection.}
For classification, we used ten benchmark datasets from the UEA archive~\citep{bagnall2018uea}, spanning diverse domains such as vision, audio, industry monitoring, and medical diagnosis, with most tasks involving around ten classes. For anomaly detection, we adopted five widely used benchmarks: SMD~\citep{su2019robust}, SWaT~\citep{mathur2016swat}, PSM~\citep{abdulaal2021practical}, MSL, and SMAP~\citep{hundman2018detecting}, covering domains such as server machines, spacecraft, and critical infrastructure. Dataset descriptions are summarized in Table~\ref{appendix:cls_ad_data}.

\begin{table*}[ht]
    \centering
    \caption{Computational analysis of permutation-invariant aggregation functions. The trade-off between representational expressiveness and memory/compute overhead motivates task-adaptive selection of $\psi$.}
    \label{analysis:gpu}
    \begin{tabular}{l|c|c|c|c}
    \toprule
    \multirow{2}{*}{Aggregation $\psi$} & Work & \multirow{2}{*}{Parallel Span} & Peak Memory & \multirow{2}{*}{Characteristics}
    \\
     & (per timestep) & & Overhead & \\
    \midrule 
    Mean & $\mathcal{O}(Cd)$ & $\mathcal{O}(\log C)$ & $\mathcal{O}(d)$ & Stable, low-variance descriptor \\
    Sum & $\mathcal{O}(Cd)$ & $\mathcal{O}(\log C)$ & $\mathcal{O}(d)$ & Same cost as mean, without normalization \\
    Attention & $\mathcal{O}(Cd^2)$ & $\mathcal{O}(\log C)$ & $\mathcal{O}(Cd)$ & Expressive, selectively emphasizes informative variables \\
    \bottomrule
    \end{tabular}
\end{table*}

\subsection{Aggregation Function and Computational Behavior}
\label{appendix:aggregation}
\vspace{5pt}\noindent\textbf{Choice of the Aggregation Function $\psi$.}
The proposed formulation allows any permutation-invariant aggregation operator $\psi$, such as mean/sum pooling or attention-based pooling. In practice, the default choice of $\psi$ is selected according to the inductive bias and computational structure of each task:
\begin{itemize}
    \item \textbf{Forecasting.} Multi-step forecasting tasks (\eg, ETTh1/ETTm1) involve strong shared seasonal components across variables and are highly sensitive to variance amplification. Mean pooling produces a low-variance global descriptor that suppresses channel-wise noise and stabilizes long-horizon prediction. This property leads to more robust extrapolation than attention-based pooling.
    \item \textbf{Classification \& Anomaly Detection.} In contrast, these tasks rely on discriminative, potentially sparse cross-variable interactions, where only a subset of channels may indicate class identity or abnormal behavior. Mean pooling may dilute these informative variables, whereas attention pooling selectively emphasizes salient channels, improving representation quality.
\end{itemize}

\vspace{5pt}\noindent\textbf{GPU Throughput and Memory Usage.}
To clarify the computational implications of different choices of $\psi$, Table~\ref{analysis:gpu} summarizes the work complexity, parallel span, and peak memory usage. Let $C$ denote the number of variables and $d$ denote the hidden dimension.

Mean/sum pooling require only a single accumulator vector and do not store intermediate key/value tensors, resulting in minimal peak memory. Attention pooling, however, stores per-variable projections and attention scores, causing memory to grow linearly with $C$. This explains why forecasting, which benefits from stability and scalability, is paired with mean pooling, whereas expressive tasks (classification/anomaly detection) are paired with attention-based pooling.

\subsection{Full Experimental Results}
\label{appendix:additional_exp}

The overall results for long- and short-term forecasting, classification, and anomaly detection are reported in Table~\ref{appendix:long_term}-\ref{appendix:anomaly_detection}.

\begin{table*}[ht]
\centering
\caption{Full results of long-term time-series forecasting.}
\label{appendix:long_term}
\setlength{\tabcolsep}{2pt}
\renewcommand{\arraystretch}{1.2}
\resizebox{\textwidth}{!}{
\begin{tabular}{c|c|cc|cc|cc|cc|cc|cc|cc|cc|cc|cc|cc|cc|cc|cc}
\toprule
\multicolumn{2}{c|}{} & \multicolumn{2}{c|}{Ours} & \multicolumn{2}{c|}{Chimera} & \multicolumn{2}{c|}{TimePro} & \multicolumn{2}{c|}{Simba} & \multicolumn{2}{c|}{TCN} & \multicolumn{2}{c|}{iTransformer} & \multicolumn{2}{c|}{RLinear} & \multicolumn{2}{c|}{PatchTST} & \multicolumn{2}{c|}{Crossformer} & \multicolumn{2}{c|}{TiDE} & \multicolumn{2}{c|}{TimesNet} & \multicolumn{2}{c|}{DLinear} & \multicolumn{2}{c|}{SCINet} & \multicolumn{2}{c|}{FEDformer} \\
\cmidrule{3-30}
\multicolumn{2}{c|}{} & MSE & MAE & MSE & MAE & MSE & MAE & MSE & MAE & MSE & MAE & MSE & MAE & MSE & MAE & MSE & MAE & MSE & MAE & MSE & MAE & MSE & MAE & MSE & MAE & MSE & MAE & MSE & MAE \\
\midrule
\multirow{5}{*}{\rotatebox{90}{ETTm1}} & 96 & \textbf{0.287} & \textbf{0.342} & 0.318 & 0.354 & 0.326 & 0.364 & 0.324 & 0.360 & 0.292 & 0.346 & 0.334 & 0.368 & 0.355 & 0.376 & 0.329 & 0.367 & 0.404 & 0.426 & 0.364 & 0.387 & 0.338 & 0.375 & 0.345 & 0.372 & 0.418 & 0.438 & 0.379 & 0.419 \\
& 192 & \textbf{0.326} & \textbf{0.367} & 0.331 & 0.369 & 0.367 & 0.383 & 0.363 & 0.382 & 0.332 & 0.368 & 0.377 & 0.391 & 0.391 & 0.392 & 0.367 & 0.385 & 0.450 & 0.451 & 0.398 & 0.404 & 0.374 & 0.387 & 0.380 & 0.389 & 0.439 & 0.450 & 0.426 & 0.441 \\
& 336 & \textbf{0.359} & \textbf{0.387} & 0.363 & 0.389 & 0.402 & 0.409 & 0.395 & 0.405 & 0.365 & 0.391 & 0.426 & 0.420 & 0.424 & 0.415 & 0.399 & 0.410 & 0.532 & 0.515 & 0.428 & 0.425 & 0.410 & 0.411 & 0.413 & 0.413 & 0.490 & 0.485 & 0.445 & 0.459 \\
& 720 & \textbf{0.409} & 0.416 & \textbf{0.409} & \textbf{0.415} & 0.469 & 0.446 & 0.451 & 0.437 & 0.416 & 0.417 & 0.491 & 0.459 & 0.487 & 0.450 & 0.454 & 0.439 & 0.666 & 0.589 & 0.487 & 0.461 & 0.478 & 0.450 & 0.474 & 0.453 & 0.595 & 0.550 & 0.543 & 0.490 \\
\cmidrule{2-30}
& Avg & \textbf{0.345} & \textbf{0.378} & 0.355 & \underline{0.381} & 0.391 & 0.400 & 0.383 & 0.396 & \underline{0.351} & \underline{0.381} & 0.407 & 0.410 & 0.414 & 0.407 & 0.387 & 0.400 & 0.513 & 0.496 & 0.419 & 0.419 & 0.400 & 0.406 & 0.403 & 0.407 & 0.485 & 0.481 & 0.448 & 0.452\\
\midrule
\multirow{5}{*}{\rotatebox{90}{ETTm2}} & 96 & 0.169 & 0.259 & 0.169 & 0.265 & 0.178 & 0.260 & 0.177 & 0.263 & \textbf{0.166} & \textbf{0.256} & 0.180 & 0.264 & 0.182 & 0.265 & 0.175 & 0.259 & 0.287 & 0.366 & 0.207 & 0.305 & 0.187 & 0.267 & 0.193 & 0.292 & 0.286 & 0.377 & 0.203 & 0.287 \\
& 192 & 0.223 & 0.295 & \textbf{0.221} & \textbf{0.290} & 0.242 & 0.303 & 0.245 & 0.306 & 0.222 & 0.293 & 0.250 & 0.309 & 0.246 & 0.304 & 0.241 & 0.302 & 0.414 & 0.492 & 0.290 & 0.364 & 0.249 & 0.309 & 0.284 & 0.362 & 0.399 & 0.445 & 0.269 & 0.328  \\
& 336 & 0.279 & 0.332 & 0.279 & 0.339 & 0.303 & 0.342 & 0.304 & 0.343 & \textbf{0.272} & \textbf{0.324} & 0.311 & 0.348 & 0.307 & 0.342 & 0.305 & 0.343 & 0.597 & 0.542 & 0.377 & 0.422 & 0.321 & 0.351 & 0.369 & 0.427 & 0.637 & 0.591 & 0.325 & 0.366 \\
& 720 & 0.362 & 0.385 & \textbf{0.342} & \textbf{0.376} & 0.400 & 0.399 & 0.400 & 0.399 & 0.351 & 0.381 & 0.412 & 0.407 & 0.407 & 0.398 & 0.402 & 0.400 & 1.730 & 1.042 & 0.558 & 0.524 & 0.408 & 0.403 & 0.554 & 0.522 & 0.960 & 0.735 & 0.421 & 0.415  \\
\cmidrule{2-30}
& Avg & 0.258 & 0.318 & \textbf{0.252} & \underline{0.317} & 0.281 & 0.326 & 0.271 & 0.327 & \underline{0.253} & \textbf{0.314} & 0.288 & 0.332 & 0.286 & 0.327 & 0.281 & 0.326 & 0.757 & 0.610 & 0.358 & 0.404 & 0.291 & 0.333 & 0.350 & 0.401 & 0.571 & 0.537 & 0.305 & 0.349\\
\midrule
\multirow{5}{*}{\rotatebox{90}{ETTh1}} & 96 & \textbf{0.365} & 0.394 & 0.366 & \textbf{0.392} & 0.375 & 0.398 & 0.379 & 0.395 & 0.368 & 0.394 & 0.386 & 0.405 & 0.386 & 0.395 & 0.414 & 0.419 & 0.423 & 0.448 & 0.479 & 0.464 & 0.384 & 0.402 & 0.386 & 0.400 & 0.654 & 0.599 & 0.376 & 0.419 \\
& 192 & 0.407 & 0.417 & \textbf{0.402} & 0.414 & 0.427 & 0.429 & 0.432 & 0.424 & 0.405 & \textbf{0.413} & 0.441 & 0.436 & 0.437 & 0.424 & 0.460 & 0.445 & 0.471 & 0.474 & 0.525 & 0.492 & 0.436 & 0.429 & 0.437 & 0.432 & 0.719 & 0.631 & 0.420 & 0.448 \\
& 336 & \textbf{0.385} & 0.416 & 0.406 & 0.419 & 0.472 & 0.450 & 0.473 & 0.443 & 0.391 & \textbf{0.412} & 0.487 & 0.458 & 0.479 & 0.446 & 0.501 & 0.466 & 0.570 & 0.546 & 0.565 & 0.515 & 0.491 & 0.469 & 0.481 & 0.459 & 0.778 & 0.659 & 0.459 & 0.465 \\
& 720 & \textbf{0.434} & \textbf{0.452} & 0.458 & 0.477 & 0.476 & 0.474 & 0.483 & 0.469 & 0.450 & 0.461 & 0.503 & 0.491 & 0.481 & 0.470 & 0.500 & 0.488 & 0.653 & 0.621 & 0.594 & 0.558 & 0.521 & 0.500 & 0.519 & 0.516 & 0.836 & 0.699 & 0.506 & 0.507 \\
\cmidrule{2-30}
& Avg & \textbf{0.397} & \textbf{0.419}  & 0.408 & 0.425 & 0.438 & 0.438 & 0.441 & 0.432 & \underline{0.404} & \underline{0.420} & 0.454 & 0.447 & 0.446 & 0.434 & 0.469 & 0.454 & 0.529 & 0.522 & 0.541 & 0.507 & 0.458 & 0.450 & 0.456 & 0.452 & 0.747 & 0.647 & 0.440 & 0.460 \\
\midrule
\multirow{5}{*}{\rotatebox{90}{ETTh2}} & 96 & \textbf{0.257} & 0.329 & 0.262 & \textbf{0.327} & 0.293 & 0.345 & 0.290 & 0.339 & 0.263 & 0.332 & 0.297 & 0.349 & 0.288 & 0.338 & 0.302 & 0.348 & 0.745 & 0.584 & 0.400 & 0.440 & 0.340 & 0.374 & 0.333 & 0.387 & 0.707 & 0.621 & 0.358 & 0.397 \\
& 192 & \textbf{0.312} & \textbf{0.367} & 0.320 & 0.372 & 0.367 & 0.394 & 0.373 & 0.390 & 0.320 & 0.374 & 0.380 & 0.400 & 0.374 & 0.390 & 0.388 & 0.400 & 0.877 & 0.656 & 0.528 & 0.509 & 0.402 & 0.414 & 0.477 & 0.476 & 0.860 & 0.689 & 0.429 & 0.439 \\
& 336 & \textbf{0.303} & \textbf{0.367} & 0.316 & 0.381 & 0.419 & 0.431 & 0.376 & 0.406 & 0.313 & 0.376 & 0.428 & 0.432 & 0.415 & 0.426 & 0.426 & 0.433 & 1.043 & 0.731 & 0.643 & 0.571 & 0.452 & 0.452 & 0.594 & 0.541 & 1.000 & 0.744 & 0.496 & 0.487  \\
& 720 & \textbf{0.381} & \textbf{0.425} & 0.389 & 0.430 & 0.427 & 0.445 & 0.407 & 0.431 & 0.392 & 0.433 & 0.427 & 0.445 & 0.420 & 0.440 & 0.431 & 0.446 & 1.104 & 0.763 & 0.874 & 0.679 & 0.462 & 0.468 & 0.831 & 0.657 & 1.249 & 0.838 & 0.463 & 0.474 \\
\cmidrule{2-30}
& Avg & \textbf{0.313} & \textbf{0.372} & \underline{0.321} & \underline{0.377} & 0.377 & 0.403 & 0.361 & 0.391 & 0.322 & 0.379 & 0.383 & 0.407 & 0.374 & 0.398 & 0.387 & 0.407 & 0.942 & 0.684 & 0.611 & 0.550 & 0.414 & 0.427 & 0.559 & 0.515 & 0.954 & 0.723 & 0.437 & 0.449\\
\midrule
\multirow{5}{*}{\rotatebox{90}{ECL}} & 96 & \textbf{0.127} & \textbf{0.223} & 0.132 & 0.234 & 0.139 & 0.234 & 0.165 & 0.253 & 0.129 & 0.226 & 0.148 & 0.240 & 0.201 & 0.281 & 0.181 & 0.270 & 0.219 & 0.314 & 0.237 & 0.329 & 0.168 & 0.272 & 0.197 & 0.282 & 0.247 & 0.345 & 0.193 & 0.308\\
& 192 & 0.147 & 0.234 & 0.144 & \textbf{0.223} & 0.156 & 0.249 & 0.173 & 0.262 & \textbf{0.143} & 0.239 & 0.162 & 0.253 & 0.201 & 0.283 & 0.188 & 0.274 & 0.231 & 0.322 & 0.236 & 0.330 & 0.184 & 0.289 & 0.196 & 0.285 & 0.257 & 0.355 & 0.201 & 0.315 \\
& 336 & 0.165 & 0.266 & \textbf{0.156} & \textbf{0.259} & 0.172 & 0.267 & 0.188 & 0.277 & 0.161 & \textbf{0.259} & 0.178 & 0.269 & 0.215 & 0.298 & 0.204 & 0.293 & 0.246 & 0.337 & 0.249 & 0.344 & 0.198 & 0.300 & 0.209 & 0.301 & 0.269 & 0.369 & 0.214 & 0.329 \\
& 720 & 0.196 & 0.295 & \textbf{0.184} & \textbf{0.280} & 0.209 & 0.299 & 0.214 & 0.305 & 0.191 & 0.286 & 0.225 & 0.317 & 0.257 & 0.331 & 0.246 & 0.324 & 0.280 & 0.363 & 0.284 & 0.373 & 0.220 & 0.320 & 0.245 & 0.333 & 0.299 & 0.390 & 0.246 & 0.355 \\
\cmidrule{2-30}
& Avg & 0.158 & 0.254 & \textbf{0.154} & \textbf{0.249} & 0.169 & 0.262 & 0.185 & 0.274 & \underline{0.156} & \underline{0.253} & 0.178 & 0.270 & 0.219 & 0.298 & 0.205 & 0.290 & 0.244 & 0.334 & 0.251 & 0.344 & 0.192 & 0.295 & 0.212 & 0.300 & 0.268 & 0.365 & 0.214 & 0.327 \\
\midrule
\multirow{5}{*}{\rotatebox{90}{Exchange}} & 96 & 0.085 & 0.204 & \textbf{0.077} & 0.198 & 0.085 & 0.204 & - & - & 0.080 & \textbf{0.196} & 0.086 & 0.206 & 0.093 & 0.217 & 0.088 & 0.205 & 0.256 & 0.367 & 0.094 & 0.218 & 0.107 & 0.234 & 0.088 & 0.218 & 0.267 & 0.396 & 0.148 & 0.278 \\
& 192 & 0.179 & 0.304 & \textbf{0.159} & \textbf{0.270} & 0.178 & 0.299 & - & - & 0.166 & 0.288 & 0.177 & 0.299 & 0.184 & 0.307 & 0.176 & 0.299 & 0.470 & 0.509 & 0.184 & 0.307 & 0.226 & 0.344 & 0.176 & 0.315 & 0.351 & 0.459 & 0.271 & 0.315  \\
& 336 & 0.330 & 0.415 & 0.311 & \textbf{0.344} & 0.328 & 0.414 & - & - & 0.307 & 0.398 & 0.331 & 0.417 & 0.351 & 0.432 & \textbf{0.301} & 0.397 & 1.268 & 0.883 & 0.349 & 0.431 & 0.367 & 0.448 & 0.313 & 0.427 & 1.324 & 0.853 & 0.460 & 0.427 \\
& 720 & \textbf{0.653} & \textbf{0.580} & 0.697 & 0.623 & 0.817 & 0.679 & - & - & 0.656 & 0.582 & 0.847 & 0.691 & 0.886 & 0.714 & 0.901 & 0.714 & 1.767 & 1.068 & 0.852 & 0.698 & 0.964 & 0.746 & 0.839 & 0.695 & 1.058 & 0.797 & 1.195 & 0.695  \\
\cmidrule{2-30}
& Avg & \underline{0.311} & 0.375 & \underline{0.311} & \textbf{0.358} & 0.352 & 0.399 & - & - & \textbf{0.302} & \underline{0.366} & 0.360 & 0.403 & 0.378 & 0.417 & 0.367 & 0.404 & 0.940 & 0.707 & 0.370 & 0.413 & 0.416 & 0.443 & 0.354 & 0.414 & 0.750 & 0.626 & 0.519 & 0.429  \\
\midrule
\multirow{5}{*}{\rotatebox{90}{Traffic}} & 96 & 0.369 & 0.254 & \textbf{0.366} & \textbf{0.248} & - & - & 0.468 & 0.268 & 0.368 & 0.253 & 0.395 & 0.268 & 0.649 & 0.389 & 0.462 & 0.295 & 0.522 & 0.290 & 0.805 & 0.493 & 0.593 & 0.321 & 0.650 & 0.396 & 0.788 & 0.499 & 0.587 & 0.366 \\
& 192 & 0.381 & 0.278 & 0.394 & 0.292 & - & - & 0.413 & 0.317 & \textbf{0.379} & \textbf{0.261} & 0.417 & 0.276 & 0.601 & 0.366 & 0.466 & 0.296 & 0.530 & 0.293 & 0.756 & 0.474 & 0.617 & 0.336 & 0.598 & 0.370 & 0.789 & 0.505 & 0.604 & 0.373 \\
& 336 & \textbf{0.401} & 0.278 & 0.409 & 0.311 & - & - & 0.529 & 0.284 & 0.397 & \textbf{0.270} & 0.433 & 0.283 & 0.609 & 0.369 & 0.482 & 0.304 & 0.558 & 0.305 & 0.762 & 0.477 & 0.629 & 0.336 & 0.605 & 0.373 & 0.797 & 0.508 & 0.621 & 0.383 \\
& 720 & 0.444 & 0.295 & 0.443 & \textbf{0.294} & - & - & 0.564 & 0.297 & \textbf{0.440} & 0.296 & 0.467 & 0.302 & 0.647 & 0.387 & 0.514 & 0.322 & 0.589 & 0.328 & 0.719 & 0.449 & 0.640 & 0.350 & 0.645 & 0.394 & 0.841 & 0.523 & 0.626 & 0.382  \\
\cmidrule{2-30}
& Avg & \textbf{0.398} & \underline{0.276} & \underline{0.403} & 0.286 & - & - & 0.493 & 0.291 & \textbf{0.398} & \textbf{0.270} & 0.428 & 0.282 & 0.626 & 0.378 & 0.481 & 0.304 & 0.550 & 0.304 & 0.760 & 0.473 & 0.620 & 0.336 & 0.625 & 0.383 & 0.804 & 0.509 & 0.610 & 0.376 \\
\midrule
\multirow{5}{*}{\rotatebox{90}{Weather}} & 96 & 0.150 & 0.201 & \textbf{0.146} & 0.206 & 0.166 & 0.207 & 0.176 & 0.219 & 0.149 & \textbf{0.200} & 0.174 & 0.214 & 0.192 & 0.232 & 0.177 & 0.218 & 0.158 & 0.230 & 0.202 & 0.261 & 0.172 & 0.220 & 0.196 & 0.255 & 0.221 & 0.306 & 0.217 & 0.296 \\
& 192 & 0.198 & 0.245 & \textbf{0.189} & \textbf{0.239} & 0.216 & 0.254 & 0.222 & 0.260 & 0.196 & 0.245 & 0.221 & 0.254 & 0.240 & 0.271 & 0.225 & 0.259 & 0.206 & 0.277 & 0.242 & 0.298 & 0.219 & 0.261 & 0.237 & 0.296 & 0.261 & 0.340 & 0.276 & 0.336 \\
& 336 & 0.249 & 0.285 & 0.244 & 0.281 & 0.273 & 0.296 & 0.275 & 0.297 & \textbf{0.238} & \textbf{0.277} & 0.278 & 0.296 & 0.292 & 0.307 & 0.278 & 0.297 & 0.272 & 0.335 & 0.287 & 0.335 & 0.280 & 0.306 & 0.283 & 0.335 & 0.309 & 0.378 & 0.339 & 0.380 \\
& 720 & 0.312 & 0.326 & \textbf{0.297} & \textbf{0.309} & 0.351 & 0.346 & 0.350 & 0.349 & 0.314 & 0.334 & 0.358 & 0.347 & 0.364 & 0.353 & 0.354 & 0.348 & 0.398 & 0.418 & 0.351 & 0.386 & 0.365 & 0.359 & 0.345 & 0.381 & 0.377 & 0.427 & 0.403 & 0.428 \\
\cmidrule{2-30}
& Avg & 0.227 & \underline{0.264} & \textbf{0.219} & \textbf{0.258} & 0.251 & 0.276 & 0.255 & 0.280 & \underline{0.224} & \underline{0.264} & 0.258 & 0.278 & 0.272 & 0.291 & 0.259 & 0.281 & 0.259 & 0.315 & 0.271 & 0.320 & 0.259 & 0.287 & 0.265 & 0.317 & 0.292 & 0.363 & 0.309 & 0.360 \\
\bottomrule
\end{tabular}
}
\end{table*}

\begin{table*}[ht]
\centering
\caption{Full results of short-term forecasting in the M4 dataset.}
\label{appendix:short_term}
\renewcommand{\arraystretch}{1.2}
\resizebox{\textwidth}{!}{
\begin{tabular}{c|c|cccccccccccccccc}
\toprule
 \multicolumn{2}{c|}{} & Ours & Chimera & ModernTCN & PatchTST & TimesNet & N-HiTS & N-BEATS & ETSformer & LightTS & DLinear & FEDformer & Stationary & Autoformer & Pyraformer & Informer & Reformer \\
\midrule 
\multirow{3}{*}{\rotatebox{90}{Yearly}} & SMAPE & 13.150 & \textbf{13.107} & 13.226 & 13.258 & 13.387 & 13.418 & 13.436 & 18.009 & 14.247 & 16.965 & 13.728 & 13.717 & 13.974 & 15.530 & 14.727 & 16.169 \\
& MASE & 2.950 &  \textbf{2.902} &  2.957 &  2.985 &  2.996 &  3.045 &  3.043 &  4.487 &  3.109 &  4.283 &  3.048 &  3.078 &  3.134 &  3.711 &  3.418 &  3.800 \\
& OWA & 0.773 & \textbf{0.767} &  0.777 &  0.781 &  0.786 &  0.793 &  0.794 &  1.115 &  0.827 &  1.058 &  0.803 &  0.807 &  0.822 &  0.942 &  0.881 &  0.973 \\
\midrule
\multirow{3}{*}{\rotatebox{90}{Quarterly}} & SMAPE & 9.949 &  \textbf{9.892} &  9.971 & 10.179 & 10.100 & 10.202 & 10.124 & 13.376 & 11.364 & 12.145 & 10.792 & 10.958 & 11.338 & 15.449 & 11.360 & 13.313 \\
& MASE & 1.160 & 1.105 &  1.167 &  \textbf{0.803} &  1.182 &  1.194 &  1.169 &  1.906 &  1.328 &  1.520 &  1.283 &  1.325 &  1.365 & 2.350 &  1.401 &  1.775 \\
& OWA  & 0.875 & 0.853 &  0.878 &  \textbf{0.803} &  0.890 &  0.899 &  0.886 &  1.302 &  1.000 &  1.106 &  0.958 &  0.981 &  1.012 &  1.558 &  1.027 &  1.252 \\
\midrule
\multirow{3}{*}{\rotatebox{90}{Monthly}} & SMAPE & 12.563 & \textbf{12.549} & 12.556 & 12.641 & 12.670 & 12.791 & 12.677 & 14.588 & 14.014 & 13.514 & 14.260 & 13.917 & 13.958 & 17.642 & 14.062 & 20.128 \\
& MASE & 0.922 & \textbf{0.914} &  0.917 &  0.930 &  0.933 &  0.969 &  0.937 &  1.368 &  1.053 &  1.037 &  1.102 &  1.097 &  1.103 &  1.913 &  1.141 &  2.614 \\
& OWA  & 0.869 & \textbf{0.864} &  0.866 &  0.876 &  0.878 &  0.899 &  0.880 &  1.149 &  0.981 &  0.956 &  1.012 &  0.998 &  1.002 &  1.511 &  1.024 &  1.927 \\
\midrule
\multirow{3}{*}{\rotatebox{90}{Others}} & SMAPE & 4.708 &  \textbf{4.685} &  4.715 &  4.946 &  4.891 &  5.061 &  4.925 &  7.267 & 15.880 &  6.709 &  4.954 &  6.302 &  5.485 & 24.786 & 24.460 & 32.491 \\
& MASE & 3.165 &  3.007 &  3.107 &  \textbf{2.985} &  3.302 &  3.216 &  3.391 &  5.240 & 11.434 &  4.953 &  3.264 & 4.064 &  3.865 & 18.581 & 20.960 & 33.355 \\
& OWA  & 0.994 & 0.983 &  \textbf{0.986} &  1.044 &  1.035 &  1.040 &  1.053 &  1.591 &  3.474 &  1.487 &  1.036 &  1.304 &  1.187 &  5.538 &  5.013 &  8.679 \\
\midrule
\multirow{3}{*}{\rotatebox{90}{\tiny Weighted \scriptsize Avg}} & SMAPE & \underline{11.686} & \textbf{11.618} & 11.698 & 11.807 & 11.829 & 11.927 & 11.851 & 14.718 & 13.525 & 13.639 & 12.840 & 12.780 & 12.909 & 16.987 & 14.086 & 18.200 \\
& MASE & \underline{1.549} & \textbf{1.528} &  1.556 &  1.590 &  1.585 &  1.613 &  1.599 &  2.408 &  2.111 &  2.095 &  1.701 &  1.756 &  1.771 &  3.265 &  2.718 &  4.223 \\
& OWA & \underline{0.832} &  \textbf{0.827} &  0.838 &  0.851 &  0.851 &  0.861 &  0.855 &  1.172 &  1.051 &  1.051 &  0.918 &  0.930 &  0.939 &  1.480 &  1.230 &  1.775 \\
\bottomrule
\end{tabular}
}
\end{table*}

\begin{table*}[ht]
\centering
\caption{Full results of the classification task (accuracy \%)}
\label{appendix:classification}
\setlength{\tabcolsep}{3pt}
\renewcommand{\arraystretch}{1.2}
\resizebox{\textwidth}{!}{
\begin{tabular}{l|cccccccccccccccccc}
\toprule
\multicolumn{1}{l|}{Datasets} & LSTM & LSTNet & LSSL & Reformer & Informer & Pyraformer & Autoformer & Stationary & FEDformer & ETSformer & Flowformer & DLinear & LightTS & TimesNet & PatchTST & MTCN & Chimera & Ours \\ 
\midrule
EthanolConcentration & 32.3 & 39.9 & 31.1 & 31.9 & 31.6 & 30.8 & 31.6 & 32.7 & 31.2 & 28.1 & 33.8 & 32.6 & 29.7 & 35.7 & 32.8 & 36.3 & 39.8 & 35.7 \\
FaceDetection        & 57.7 & 65.7 & 66.7 & 68.6 & 67.0 & 65.7 & 68.4 & 68.0 & 66.0 & 66.3 & 67.6 & 68.0 & 67.5 & 68.6 & 68.3 & 70.8 & 70.4 & 69.6 \\
Handwriting          & 15.2 & 25.8 & 24.6 & 27.4 & 32.8 & 29.4 & 36.7 & 31.6 & 28.0 & 32.5 & 33.8 & 27.0 & 26.1 & 32.1 & 29.6 & 30.6 & 32.9 & 35.3 \\
Heartbeat            & 72.2 & 77.1 & 72.7 & 77.1 & 80.5 & 75.6 & 74.6 & 73.7 & 73.7 & 71.2 & 77.6 & 75.1 & 75.1 & 78.0 & 74.9 & 77.2 & 81.3 & 78.0 \\
JapaneseVowels       & 79.7 & 98.1 & 98.4 & 97.8 & 98.9 & 98.4 & 96.2 & 99.2 & 98.4 & 95.9 & 98.9 & 96.2 & 96.2 & 98.4 & 97.5 & 98.8 & 99.1 & 98.4 \\
PEMS-SF              & 39.9 & 86.7 & 86.1 & 82.7 & 81.5 & 83.2 & 82.7 & 87.3 & 80.9 & 86.0 & 83.8 & 75.1 & 88.4 & 89.6 & 89.3 & 89.1 & 89.5 & 88.1 \\
SelfRegulationSCP1   & 68.9 & 84.0 & 90.8 & 90.4 & 90.1 & 88.1 & 84.0 & 89.4 & 88.7 & 89.6 & 92.5 & 87.3 & 89.8 & 91.8 & 90.7 & 93.4 & 93.7 & 92.2\\
SelfRegulationSCP2   & 46.6 & 52.8 & 52.2 & 56.7 & 53.3 & 53.3 & 50.6 & 57.2 & 54.4 & 55.0 & 56.1 & 50.5 & 51.1 & 57.2 & 57.8 & 60.3 & 59.9 & 57.8 \\
SpokenArabicDigits   & 31.9 & 100.0 & 100.0 & 97.0 & 100.0 & 99.6 & 100.0 & 100.0 & 100.0 & 100.0 & 98.8 & 81.4 & 100.0 & 99.0 & 98.3 & 98.7 & 100.0 & 99.0 \\
UWaveGestureLibrary  & 41.2 & 87.8 & 85.9 & 85.6 & 85.6 & 83.4 & 85.9 & 87.5 & 85.3 & 85.0 & 86.6 & 82.1 & 80.3 & 85.3 & 85.8 & 86.7 & 86.7 &  90.0 \\
\midrule
Average Accuracy     & 48.6 & 71.8 & 70.9 & 71.5 & 72.1 & 70.8 & 71.1 & 72.7 & 70.7 & 71.0 & 73.0 & 67.5 & 70.4 & 73.6 & 72.5 & 74.2 & 75.3 & 74.4\\
\bottomrule
\end{tabular}
}
\end{table*}

\begin{table*}[ht]
\centering
\caption{Full results of the anomaly detection task. Metrics include Precision (P), Recall (R), and F1-score.}
\label{appendix:anomaly_detection}
\setlength{\tabcolsep}{3pt}
\renewcommand{\arraystretch}{1.2}
\resizebox{\textwidth}{!}{
\begin{tabular}{l|ccc|ccc|ccc|ccc|ccc|c}
\toprule
 & \multicolumn{3}{c|}{SMD} & \multicolumn{3}{c|}{MSL} & \multicolumn{3}{c|}{SMAP} & \multicolumn{3}{c|}{SWaT} & \multicolumn{3}{c|}{PSM} & \multirow{2}{*}{Avg F1} \\ \cmidrule{2-16}
& P & R & F1 & P & R & F1 & P & R & F1 & P & R & F1 & P & R & F1 & \\
\midrule
LSTM         & 78.52 & 65.47 & 71.41 & 78.04 & 86.22 & 81.93 & 91.06 & 57.49 & 70.48 & 78.06 & 91.72 & 84.34 & 69.24 & 99.53 & 81.67 & 77.97 \\
Transformer  & 83.58 & 76.13 & 79.56 & 71.57 & 87.37 & 78.68 & 89.37 & 57.12 & 69.70 & 68.84 & 96.53 & 80.37 & 62.75 & 96.56 & 76.07 & 76.88 \\
LogTrans     & 83.46 & 70.13 & 76.21 & 73.05 & 87.37 & 79.57 & 89.15 & 57.59 & 69.97 & 68.67 & 97.32 & 80.52 & 63.06 & 98.00 & 76.74 & 76.60 \\
TCN          & 84.06 & 79.07 & 81.49 & 75.11 & 82.44 & 78.60 & 86.90 & 59.23 & 70.45 & 76.59 & 95.71 & 85.09 & 54.59 & 99.77 & 70.57 & 77.24 \\
Reformer     & 82.58 & 69.24 & 75.32 & 85.51 & 83.31 & 84.40 & 90.91 & 57.44 & 70.40 & 72.50 & 96.53 & 82.80 & 59.93 & 95.38 & 73.61 & 77.31 \\
Informer     & 86.60 & 77.23 & 81.65 & 81.77 & 86.48 & 84.06 & 90.11 & 57.13 & 69.92 & 70.29 & 96.75 & 81.43 & 64.27 & 96.33 & 77.10 & 78.83 \\
Anomaly*     & 88.91 & 82.23 & 85.49 & 79.61 & 87.37 & 83.31 & 91.85 & 58.11 & 71.18 & 72.51 & 97.32 & 83.10 & 68.35 & 94.72 & 79.40 & 80.50 \\
Pyraformer   & 85.61 & 80.61 & 83.04 & 83.81 & 85.93 & 84.86 & 92.54 & 57.71 & 71.09 & 87.92 & 96.00 & 91.78 & 71.67 & 96.02 & 82.08 & 82.57 \\
Autoformer   & 88.06 & 82.35 & 85.11 & 77.27 & 80.92 & 79.05 & 90.40 & 58.62 & 71.12 & 89.85 & 95.81 & 92.74 & 99.08 & 88.15 & 93.29 & 84.26 \\
LSSL         & 78.51 & 65.32 & 71.31 & 77.55 & 88.18 & 82.53 & 89.43 & 53.43 & 66.90 & 79.05 & 93.72 & 85.76 & 66.02 & 92.93 & 77.20 & 76.74 \\
Stationary   & 88.33 & 81.21 & 84.62 & 68.55 & 89.14 & 77.50 & 89.37 & 59.02 & 71.09 & 68.03 & 96.75 & 79.88 & 97.82 & 96.76 & 97.29 & 82.08 \\
DLinear      & 83.62 & 71.52 & 77.10 & 84.34 & 85.42 & 84.88 & 92.32 & 55.41 & 69.26 & 80.91 & 95.30 & 87.52 & 98.28 & 89.26 & 93.55 & 82.46 \\
ETSformer    & 87.44 & 79.23 & 83.13 & 85.13 & 84.93 & 85.03 & 92.25 & 55.75 & 69.50 & 90.02 & 80.36 & 84.91 & 99.31 & 85.28 & 91.76 & 82.87 \\
LightTS      & 87.10 & 78.42 & 82.53 & 82.40 & 75.78 & 78.95 & 92.58 & 55.27 & 69.21 & 91.98 & 94.72 & 93.33 & 98.37 & 95.97 & 97.15 & 84.23 \\
FEDformer    & 87.95 & 82.39 & 85.08 & 77.14 & 80.07 & 78.57 & 90.47 & 58.10 & 70.76 & 90.17 & 96.42 & 93.19 & 97.31 & 97.16 & 97.23 & 84.97 \\
TimesNet (I) & 87.76 & 82.63 & 85.12 & 82.97 & 85.42 & 84.18 & 91.50 & 57.80 & 70.85 & 88.31 & 96.24 & 92.10 & 98.22 & 92.21 & 95.21 & 85.49 \\
TimesNet (R) & 88.66 & 83.14 & 85.81 & 83.92 & 86.42 & 85.15 & 92.52 & 58.29 & 71.52 & 86.76 & 97.32 & 91.74 & 98.19 & 96.76 & 97.47 & 86.34 \\
CrossFormer  & 83.60 & 76.61 & 79.70 & 84.68 & 83.71 & 84.19 & 92.04 & 55.37 & 69.14 & 88.49 & 93.48 & 90.92 & 97.16 & 89.73 & 93.30 & 83.45 \\
PatchTST     & 87.42 & 81.65 & 84.44 & 84.07 & 86.23 & 85.14 & 92.43 & 57.51 & 70.91 & 80.70 & 94.93 & 87.24 & 98.87 & 93.99 & 96.37 & 84.82 \\
ModernTCN   & 87.86 & 83.85 & 85.81 & 83.94 & 85.93 & 84.92 & 93.17 & 57.69 & 71.26 & 91.83 & 95.98 & 93.86 & 98.09 & 96.38 & 97.23 & 86.62 \\
Chimera  & 87.74 & 83.29 & 85.46 & 84.01 & 86.83 & 85.39 & 93.05 & 58.12 & 71.55 & 92.18 & 95.93 & 94.01 & 97.30 & 96.19 & 96.74 & 86.69 \\
Ours  & 86.31 & 81.33 & 83.74 & 85.66 & 84.31 & 84.97 & 92.24 & 64.86 & 76.17 & 91.76 & 95.29 & 93.94 & 98.20 & 96.67 & 97.42 & 87.24\\
\bottomrule
\end{tabular}
}
\end{table*}

\end{document}